\documentclass{article}
\usepackage[utf8]{inputenc}
\usepackage[table,xcdraw]{xcolor}
\usepackage{authblk}
\usepackage{graphicx}
\usepackage[a4paper, total={6in, 8in}]{geometry}
\usepackage{amsthm}
\usepackage{amssymb}
\usepackage{multirow}
\usepackage{amsmath}
\usepackage{appendix}
\usepackage{microtype}
\newtheorem{theorem}{Theorem}
\newtheorem{corollary}{Corollary}
\newtheorem{lemma}{Lemma}

\title{A Unified Framework for Generic, Query-Focused, Privacy Preserving and Update Summarization using Submodular Information Measures}

\makeatletter
\newcommand{\printfnsymbol}[1]{%
  \textsuperscript{\@fnsymbol{#1}}%
}
\makeatother

\author[1]{Vishal Kaushal\thanks{equal contribution}}
\author[2]{Suraj Kothawade\printfnsymbol{1}}
\author[1]{Ganesh Ramakrishnan}
\author[3]{Jeff Bilmes}
\author[4]{Himanshu Asnani}
\author[2]{Rishabh Iyer}

\affil[1]{Department of Computer Science, Indian Institute of Technology Bombay}
\affil[2]{Department of Computer Science, University of Texas at Dallas}
\affil[3]{Department of Electrical and Computer Engineering, University of Washington, Seattle}
\affil[4]{School of Technology and Computer Science, Tata Institute of Fundamental Research, Mumbai}

\date{}
\DeclareMathOperator*{\argmax}{arg\,max}
\begin{document}
\maketitle

\begin{abstract}
We study submodular information measures as a rich framework for generic, query-focused, privacy sensitive, and update summarization tasks. While past work generally treats these problems differently ({\em e.g.}, different models are often used for generic and query-focused summarization), the submodular information measures allow us to study each of these problems via a unified approach. We first show that several previous query-focused and update summarization techniques have, unknowingly, used various instantiations of the aforesaid submodular information measures, providing evidence for the benefit and naturalness of these models. We then carefully study and demonstrate the modelling capabilities of the proposed functions in different settings and empirically verify our findings on both a synthetic dataset and an existing real-world image collection dataset (that has been extended by adding concept annotations to each image making it suitable for this task) and will be publicly released.
We employ a max-margin framework to learn a mixture model built using the proposed instantiations of submodular information measures and demonstrate the effectiveness of our approach. While our experiments are in the context of image summarization, our  framework is generic and can be easily extended to other summarization settings (e.g., videos or documents). 
\end{abstract}

\section{Introduction}

Recent times have seen unprecedented data growth in different modalities --- e.g., text, images and videos. This has naturally given rise to the need for automatic summarization techniques that automatically summarize data by eliminating redundancy but preserve the main content. As a result, a growing number of publications have been studying applications such as document, image collection, and video summarization. 

There are several flavors of data summarization depending on the domain and user intent. \emph{Generic data summarization} obtains a diverse, yet representative, subset of the entire dataset. While generic summarization does not capture user intent, there are multiple ways of incorporating intent within a summary. One such previously explored approach considers a user query as an additional input to the summarization algorithm. In addition to possessing the desired characteristics of representation, coverage, diversity {\em etc.}, such \emph{query-focused} summaries must additionally be relevant to the input query. Another example of intent is to produce summaries irrelevant to certain topics (for example, one may be uninterested in soccer in a daily sports news summary). An intent could also request a \emph{privacy preserving summarization}, wherein a user may not want elements from or similar to a \emph{private set} in the summary. Summarizing a document while excluding certain confidential or sensitive words, or summarizing an image collection while excluding certain people are examples of privacy-preserving summarization. The key difference between \emph{irrelevance} and \emph{privacy-preserving} summarization is that, while in case of irrelevance one may be willing to occasionally have constraint violations (e.g., occasional appearance of a soccer article), privacy preserving summarization enforces a hard constraint and mandates zero occurence of the private information in the summary. Another form of intent is called \emph{update summarization}, introduced in the Document Understanding Conference (DUC) 2007~\cite{prasad2007iiit}, which selects a summary conditioned on one already seen by the user (thereby additionally requiring diversity relative to the previously seen summary).
While researchers have studied generic, query-focused, and update summarization, the privacy preserving and query irrelevance summarization techniques have received less attention. Furthermore, most previous work deals with these problems each in isolation.\looseness-1


In this work, we present a powerful framework that can easily express all the aforementioned forms in a unified manner (i.e., it allows us to view generic, query-focused, query-irrelevance, privacy preserving, and update summarization as special cases). Our framework builds upon submodular mutual information measures (to be defined) and employs different submodular functions to parameterize them.\looseness-1

Define $V = \{1, 2, \cdots, n\}$ as a ground-set of $n$ items (e.g., images, video frames, or sentences). Submodular functions~\cite{fujishige2005submodular} are a special class of set functions $f: 2^V \rightarrow \mathbf{R}$ that exhibit a "diminishing returns" property: given subsets $A \subseteq B$ and an item $i \notin B$, submodular functions must satisfy $f(i | A) \triangleq f(A \cup i) - f(A) \geq f(i | B)$. This  makes submodularity suitable for modeling characteristics, such as diversity, coverage, and representation, that are necessary in a good summary. Submodular functions also admit simple and scalable greedy algorithms with constant factor approximation guarantees~\cite{nemhauser1978analysis}. For these reasons, submodular functions have been applied extensively in document, image collection, and video summarization, and have achieved state-of-the-art results~\cite{lin2011class,lin2012learning,tschiatschek2014learning,Gygli2015VideoSB}. Though recent state-of-the-art summarization techniques use deep models for representing \emph{importance} and \emph{relevance}, they are often complemented by submodular functions and DPPs to represent diversity, representation and coverage~\cite{zhang2016video,cho2019improving,tschiatschek2014learning,vasudevan2017query}.

Submodular functions alone, however, neither naturally handle relevance/irrelevance to a query nor privacy constraints, both of which are key to the tasks we consider. In this work, we address this via combinatorial information measures defined using submodular functions. In particular we study different instantiations of submodular mutual information and conditional gain functions, their characteristics in query-focused and privacy-preserving settings and demonstrate their effective application in these different flavors of summarization. Our framework applies equally well to single as well as multiple queries and/or private sets.

\section{Related Work and Our Contributions}

We discuss selected related work only that is closest to our own.

\textbf{Image collection summarization: } Approaches for image collection summarization include dictionary learning~\cite{yang2013image}, clustering and k-mediods~\cite{zhao2016visual}, and deep learning~\cite{ozkose2019diverse}. \cite{tschiatschek2014learning} study a class of submodular functions for image collection summarization, and also showed that prior work on image summarization had (unknowingly) used submodular models. \cite{tschiatschek2014learning} also introduce an image summarization corpus that we extend in this work for the query-focused, privacy, and update flavors. 
To the best of our knowledge, query-focused, update, and privacy sensitive summarization have not been studied for image collection summarization. 

\textbf{Summarization in other domains:} Like with image summarization, submodular optimziation has been beneficial in document summarization~\cite{lin2010multi,lin2011class,lin2012learning,li2012multi,chali2017towards,yao2017recent} and video summarization~\cite{zhang2016video,Gygli2015VideoSB,Kaushal2019DemystifyingMV,Kaushal2019AFT}. While most existing work has focused on generic summarization, few papers have studied query based summarization in video and document summarization. 
In the case of video summarization, \cite{vasudevan2017query} study a simple graph-cut based query relevance term (which is a special case of our submodular mutual information framework). Also~\cite{sharghi2016query,sharghi2017query} studies hierarchical DPPs to model both diversity and query relevance, and the query-DPP considered here is a special case of our framework. Others employ deep-learning based methods~\cite{xiao2020convolutional,jiang2019hierarchical}. In terms of document summarization, \cite{lin2011class} define a joint diversity and query relevance term (which we show can also be seen as an instance of our submodular mutual information), and which achieved state of the art results for query-focused document summarization. \cite{li2012multi} use graph-cut functions for query-focused and update summarization (both of which are, again, instances of our submodular information measures).



\subsection{Our Contributions}  We present a unified framework that jointly models notions such as query relevance, irrelevance, privacy sensitivity, and update constraints in summarization, using submodular information measures. We show how our framework generalizes and also unifies past work in this area~\cite{vasudevan2017query,sharghi2016query,lin2011class,lin2010multi,li2012multi,chali2017towards} some of which have unknowingly used special cases of our measures. 
We define the notion of generalized submodular mutual information, and show that the widely-used query-specific~\emph{ROUGE} metric~\cite{lin2004rouge} can be seen as an instance of submodular mutual information, extending the work in~\cite{lin2011class} showing that ROUGE-N is submodular. We then study the modeling and representation power of these models, and show how our framework  handles single and multiple query sets and private sets, and control the hardness of constraints from being `irrelevant' (elements to be avoided as much as possible) to `privacy sensitive' (elements in private set \emph{must not} occur in the summary). We also describe how our framework complements deep learning summarization techniques through the modeling of diversity/representation and query/privacy constraints.  We extend an existing image dataset by annotating every image with concepts making it suitable for the task at hand (and plan to make it publicly available) and, via max-margin learning, we report promising results under all four settings. Although we use image summarization in our experiments, we note that our framework is equally applicable to any summarization domain.

\section{A Unified Summarization Framework}
In this section, we introduce the submodular information measures and show how they offer a unified framework for query, privacy sensitive, irrelevance, update, and generic summarization. We discuss the optimization problem in each setting. This is followed by describing the instantiations of these measures with particular submodular function along with a brief discussion about their representational power. The proofs of all results and lemmas are contained in the Appendix.

Denote $V$ as the ground-set of items to be summarized. We denote by $V^{\prime}$ an auxiliary set that contains user information such as a query or a private set. The auxiliary information provided by the user may not be in the same space as the items $V$ -- for example, if the items in $V$ are images, the query could be text queries. In such a case, we assume we have a \emph{joint} embedding that can represent both the query and the image items, and correspondingly, we can define say similarity matrices between the items in $V$ and $V^{\prime}$. Next, let $\Omega  = V \cup V^{\prime}$ and define a set function $f: 2^{\Omega} \rightarrow \mathbf{R}$. Although $f$ is defined on $\Omega$, summarization is on the items in $V$, i.e., the discrete optimization problem will be only on subsets of $V$.\looseness-1

\subsection{Submodular Information Measures}
Submodular information  measures result from the generalization of information-theoretic quantities like (conditional) entropy and mutual information to general submodular functions. Recall that a function $f$ is submodular iff $f(A) + f(B) \geq f(A \cup B) + f(A \cap B), \forall A, B \subseteq \Omega$. Submodular functions model a number of desirable characteristics, important in summarization problems such as \emph{diversity, relevance, coverage and importance}. Examples of submodular functions include facility location, set cover, log determinants, graph cuts, concave over modular functions etc. Submodular functions have extensively been used in many summarization applications. 

\textbf{Restricted Submodularity:} While submodular functions are expressive, many natural choices are not everywhere submodular. As a result, previous work has considered restricted submodularity~\cite{fujishige2005submodular}. This is particularly relevant to us, since we are considering sets $V$ and $V^{\prime}$, and we are only selecting the summary set from $V$. As a result, we do not require the function to be submodular on all subsets of $\Omega = V \cup V^{\prime}$, and this notion of restricted submodularity will enable us to define generalizations of the submodular mutual information (see below). The idea of restricted submodularity is as follows. Instead of requiring the submodular inequality to hold for all pairs of sets $(A, B) \in 2^{2\Omega}$, we can consider only subsets of this power set. In particular, define a subset $\mathcal C \subseteq 2^{2V}$. Then \emph{restricted submodularity} on $\mathcal C$ satisfies $f(A) + f(B) \geq f(A \cup B) + f(A \cap B), \forall (A, B) \in \mathcal C$. Examples of restricted submodularity include intersecting submodular functions, where $\mathcal C$ consists of sets $A, B$ such that $A \cap B \neq \emptyset$ and crossing submodularity is where $\mathcal C$ consists of sets $(A, B) \in 2^{2\Omega}: A \cap B \neq \emptyset$ and $A \cup B \neq \Omega$. Note that intersecting submodular functions do not satisfy the diminishing returns property, which submodular functions satisfy. 

\textbf{Conditional Gain (CG): } We begin by defining the \emph{conditional gain} of a submodular function. Given a set of items $B \subseteq \Omega$, the conditional gain is $$f(A | B) = f(A \cup B) - f(B)$$ Examples of CG include $f(A | P) = f(A \cup P) - f(P), A \subseteq V$ where $P \subseteq V^{\prime}$
is either the \emph{private set} or the \emph{irrelevant set}. 
Another example of CG is $f(A | A_0), A, A_0 \in V$ where $A_0$ is a summary chosen by the user \emph{before}. This is important for update summarization where the desired summary should be different from a preexisting one. 


\textbf{Submodular Mutual Information (SMI):} We define the \emph{submodular mutual information}~\cite{guillory2011-active-semisupervised-submodular,levin2020online}. Given a set $B \subseteq \Omega$, the submodular mutual information $I_f(A; B)$ is:
   $$I_f(A; B) = f(A) + f(B) - f(A \cup B)$$
The SMI function we will consider in this work is $I_f(A; Q) = f(A) + f(Q) - f(A \cup Q)$, where $Q \subseteq V^{\prime}$ is a query set. Some simple properties of SMI which follow almost immediately from definition is that $I_f(A; B) \geq 0$ and $I_f(A; B)$ is also monotone in $A$ for a fixed $B$. Note that $I_f(A; Q)$ models the mutual coverage, or shared information, between $A$ and $Q$, and is  useful for modeling query relevance in query-focused summarization. Note that $I_f(A; Q)$ is unfortunately not submodular in $A$ for a fixed $Q$ in general~\cite{krause2008near}. However, many of the instantiations we define in this work, actually turn out to be submodular (details below).
 

\textbf{Generalized Submodular Mutual Information (GSMI): } Now, we define the \emph{generalized submodular mutual information} function $I_f^{\mathcal C}(A; B)$ where $f$ is restricted submodular on $\mathcal C$. Note that $I_f^{\mathcal C}(A; B) \geq 0$ for sets $(A, B) \in \mathcal C$. Given this definition, we consider a $\mathcal C$ defined as follows. Given sets $V$ (the summary space) and $V^{\prime}$ (the auxiliary space), define $\mathcal C(V, C^{\prime}) \subseteq 2^{2\Omega}$ to be such that the sets $(A, B) \in \mathcal C(V, C^{\prime})$ satisfy the following conditions: $A \subseteq V$ or $A \subseteq V^{\prime}$ and $B$ is any set, or $A$ is any set and $B \subseteq V$ or $B \subseteq V^{\prime}$. The following lemma shows that the generalized submodular mutual information on $\mathcal C(V, V^{\prime})$ satisfies non-negativity and monotonicity properties.
\begin{lemma} \label{lemma:genSMI}
Given a restricted submodular function $f$ on $\mathcal C(V, V^{\prime})$, then $I_f(A; A^{\prime}) \geq 0$ for $A \subseteq V, A^{\prime} \subseteq V^{\prime}$. Also, $I_f(A; A^{\prime})$ is monotone in $A \subseteq V$ for fixed $A^{\prime} \subseteq V^{\prime}$ (equivalently, $I_f(A; A^{\prime})$ is monotone in $A^{\prime} \subseteq V^{\prime}$ for fixed $A \subseteq V$).
\end{lemma}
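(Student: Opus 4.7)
The plan is to reduce both claims to a single application of the restricted submodular inequality, applied to a pair of sets that is in $\mathcal C(V, V^{\prime})$. Throughout, I will work under the standing (implicit) assumption that $V$ and $V^{\prime}$ are disjoint, so that whenever $A \subseteq V$ and $A^{\prime} \subseteq V^{\prime}$ we have $A \cap A^{\prime} = \emptyset$, and under the usual normalization $f(\emptyset) = 0$.

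For the non-negativity statement, I would simply apply the restricted submodular inequality to the pair $(A, A^{\prime})$ itself. This pair lies in $\mathcal C(V, V^{\prime})$ because $A \subseteq V$. Thus $f(A) + f(A^{\prime}) \geq f(A \cup A^{\prime}) + f(A \cap A^{\prime}) = f(A \cup A^{\prime})$, which is exactly $I_f(A; A^{\prime}) \geq 0$.

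For monotonicity in $A$, fix $A^{\prime} \subseteq V^{\prime}$ and let $A_1 \subseteq A_2 \subseteq V$. The goal is
\[
  f(A_2) - f(A_2 \cup A^{\prime}) \;\geq\; f(A_1) - f(A_1 \cup A^{\prime}).
\]
I would apply restricted submodularity to the pair $(A_2,\, A_1 \cup A^{\prime})$. This pair is in $\mathcal C(V, V^{\prime})$ because $A_2 \subseteq V$. Using $A_1 \subseteq A_2$, the union simplifies as $A_2 \cup (A_1 \cup A^{\prime}) = A_2 \cup A^{\prime}$, and using $V \cap V^{\prime} = \emptyset$, the intersection simplifies as $A_2 \cap (A_1 \cup A^{\prime}) = (A_2 \cap A_1) \cup (A_2 \cap A^{\prime}) = A_1$. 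The submodular inequality then reads $f(A_2) + f(A_1 \cup A^{\prime}) \geq f(A_2 \cup A^{\prime}) + f(A_1)$, which rearranges exactly to the desired inequality $I_f(A_2; A^{\prime}) \geq I_f(A_1; A^{\prime})$. The parenthetical monotonicity in $A^{\prime}$ follows by the symmetric argument, applying restricted submodularity to $(A_2^{\prime},\, A_1^{\prime} \cup A)$ for $A_1^{\prime} \subseteq A_2^{\prime} \subseteq V^{\prime}$, which again lies in $\mathcal C(V, V^{\prime})$ because $A_2^{\prime} \subseteq V^{\prime}$.

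The only real subtlety, and the place to be careful, is verifying that every pair to which the restricted inequality is applied actually belongs to $\mathcal C(V, V^{\prime})$ — ordinary submodularity is \emph{not} assumed on all of $2^{\Omega}$. The chosen pairs work precisely because the first coordinate in each application sits entirely inside $V$ (or $V^{\prime}$ in the symmetric case), which matches the clause "$A \subseteq V$ or $A \subseteq V^{\prime}$ and $B$ is any set" in the definition of $\mathcal C(V, V^{\prime})$. The disjointness of $V$ and $V^{\prime}$ is used only to collapse $A_2 \cap (A_1 \cup A^{\prime})$ to $A_1$; if that assumption were dropped, one would need an additional argument to handle the "cross" term $A_2 \cap A^{\prime}$.
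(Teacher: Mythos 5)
Your proof is correct and follows essentially the same route as the paper's: both reduce each claim to one application of the restricted submodular inequality on a pair whose first coordinate lies entirely inside $V$ (for non-negativity the pair $(A, A^{\prime})$, using $f(\emptyset)=0$). For monotonicity the paper argues via single-element increments, applying the inequality to $(A \cup j,\, A \cup A^{\prime})$, which is exactly the specialization of your pair $(A_2,\, A_1 \cup A^{\prime})$ to $A_1 = A$, $A_2 = A \cup \{j\}$, so your version is just the general nested-set form of the same argument.
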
 
\begin{proof}
The non-negativity of the generalized submodular mutual information follows from the definition. In particular, since $A \subseteq V$ and $A^{\prime} \subseteq V^{\prime}$, it holds that $$I_f(A; A^{\prime}) = f(A) + f(A^{\prime}) - f(A \cup A^{\prime}) \geq 0$$ because $f$ is restricted submodular on $C(V, V^{\prime})$. Next, we prove the monotonicity. We have, $$I_f(A \cup j; A^{\prime}) - I_f(A; A^{\prime}) = [f(j | A) - f(j | A \cup A^{\prime})], \forall j \in V \backslash A$$ Given that $f$ is restricted submodular on $\mathcal C(V, V^{\prime})$, it holds that $$[f(j | A) - f(j | A \cup A^{\prime})] \geq 0$$ since $$f(A \cup j) + f(A \cup A^{\prime})\geq f(A) + f(A \cup A^{\prime} \cup j)$$ which follows since the submodularity inequality holds as long as one of the sets is a subset of either $V$ or $V^{\prime}$ (i.e. both subsets do not non-empty intersection with $V$ and $V^{\prime}$. Thus $$I_f(A \cup j; A^{\prime}) - I_f(A; A^{\prime}) \geq 0$$ and hence monotone. 
\end{proof}

\textbf{Conditional Submodular Mutual Information (CSMI): } Given a query set $Q$ and a private set $P$, we would like to select a subset $A \subseteq V$ which has a high similarity with respect to a query set $Q$, while simultaneously being different from the private set $P$. A natural way to do this is by maximizing the conditional submodular mutual information. Given a submodular function $f$, $I_f(A; Q | P)$ is defined as:
    $I_f(A; Q | P) = f(A | P) + f(Q | P) - f(A \cup Q | P)$.
Given a submodular function $f$, the CSMI can either be viewed as the mutual information of the conditional gain function, or the conditional gain of the submodular mutual information.
\begin{lemma}\label{lemma:cond-MI}
Given a submodular function $f$ and sets $P, Q$, define $g_P(A) = f(A | P)$ and $h_Q(A) = I_f(A; Q)$. Then $I_f(A; Q | P) = I_{g_P}(A; Q) = h_Q(A | P)$. That is, given a submodular function $f$, the CSMI can either be viewed as the mutual information of the conditional gain function, or the conditional gain of the submodular mutual information.
\end{lemma}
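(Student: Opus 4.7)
The plan is a direct unfolding of the three definitions, since no inequality or structural property of submodularity is invoked in the statement itself. I would first compute a canonical expansion of the left-hand side $I_f(A;Q\mid P)$ purely in terms of $f$ evaluated on unions involving $A$, $Q$, $P$, and then show both proposed alternative forms reduce to the same canonical expression.

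Concretely, using the definition of conditional gain in each of the three terms of $I_f(A;Q\mid P) = f(A\mid P)+f(Q\mid P)-f(A\cup Q\mid P)$, I would arrive at the canonical form
\[
I_f(A;Q\mid P) \;=\; f(A\cup P) + f(Q\cup P) - f(A\cup Q\cup P) - f(P).
\]
For the first claimed equality, I would then expand $I_{g_P}(A;Q) = g_P(A)+g_P(Q)-g_P(A\cup Q)$ and substitute $g_P(\cdot)=f(\cdot\mid P)$ directly, observing that each of the three $-f(P)$ contributions collapses to a single $-f(P)$ (the three $f(P)$ terms combine as $-f(P)-f(P)+f(P)=-f(P)$), yielding exactly the canonical form above.

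For the second equality, I would expand $h_Q(A\mid P) = h_Q(A\cup P) - h_Q(P)$ and substitute $h_Q(\cdot)=I_f(\cdot;Q)$, giving
\[
h_Q(A\mid P) \;=\; \bigl[f(A\cup P)+f(Q)-f(A\cup P\cup Q)\bigr] \;-\; \bigl[f(P)+f(Q)-f(P\cup Q)\bigr],
\]
and then observe that the two $f(Q)$ terms cancel and the remaining expression rearranges to the same canonical form. Comparing the two computations to the canonical expansion of $I_f(A;Q\mid P)$ completes the proof.

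There is no real obstacle here; the only thing to be careful about is bookkeeping of the $f(P)$ and $f(Q)$ terms to ensure the cancellations are done correctly. The lemma is essentially the statement that the ``$-f(P)$'' offset introduced by conditioning commutes with the SMI construction, and the symmetric expression $f(A\cup P)+f(Q\cup P)-f(A\cup Q\cup P)-f(P)$ makes this manifest and can serve as the hub through which the two alternative forms are reconciled.
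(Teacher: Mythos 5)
Your proposal is correct and follows essentially the same route as the paper's proof: both reduce $I_f(A;Q\mid P)$, $I_{g_P}(A;Q)$, and $h_Q(A\mid P)$ to the common canonical expression $f(A\cup P)+f(Q\cup P)-f(A\cup Q\cup P)-f(P)$ by direct unfolding of the definitions. The bookkeeping of the $f(P)$ and $f(Q)$ cancellations is exactly as you describe, so nothing is missing.
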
 
\begin{proof}
We first prove that $I_f(A; Q | P) = I_{g_P}(A; Q)$. First, recall that:$$I_f(A; Q | P) = f(A \cup P) + f(Q \cup P) - f(A \cup Q \cup P) - f(P)$$
Then, we expand $I_{g_P}(A; Q)$ and observe that:
\begin{align*}
   I_{g_P}(A; Q) &= g_P(A) + g_P(Q) - g_P(A \cup Q)  \\
   &= f(A | P) + f(Q | P) - f(A \cup Q | P)  \\
   &= f(A \cup P) + f(Q \cup P) - f(A \cup Q \cup P) - f(P)  \\
   &= I_f(A; Q | P)
\end{align*}
Next, we show that $I_f(A; Q | P) = h_Q(A | P)$. To  show this, we expand $h_Q(A | P)$:
\begin{align*}
    h_Q(A | P) &= h_Q(A \cup P) - h_Q(P) \\
    &= I_f(A \cup P; Q) - I_f(P; Q) \\
    &= f(A \cup P) + f(Q) - f(A \cup P \cup Q) - f(P) - f(Q) + f(P \cup Q) \\
    &= I_f(A; Q | P)
\end{align*}
Hence proved.
\end{proof}


\subsection{Different Flavors of Summarization}
Given sets $S$ and $T$, and a (restricted) submodular function $f$, consider the following {\it master optimization problem}:
    $\max_{A: |A| \leq k}  I_f(A; S | T)$
In the sections below, we  discuss how the different types of summarization mentioned above can be seen as special cases of this master problem.

\noindent \textbf{Generic: } Generic summarization is an instance of 
the master optimization problem with $S = V$ and $T = \emptyset$. Note that we get back the problem of cardinality constrained maximization of $f$, which is generic summarization. 

\noindent \textbf{Query-focused:} With a query set $Q$, this can be written as solving the above 
with $S = Q$ and $T = \emptyset$.\looseness-1

\noindent \textbf{Privacy Preserving \& Query Irrelevance: } In both cases, we set $S = V$ and $T = P$ (with $P$ the private or irrelevant set) in the master optimization problem. 
These problems differ in the way $f$ is designed. In particular, for privacy sensitive summarization, we need to ensure no data point similar to the private set is chosen. We  study these  aspects in the next subsection.

\noindent \textbf{Update:} For update summarization, the user has already seen set $A_0$, and we must select $A$ that is diverse w.r.t.\ $A_0$. The master optimization problem solves this 
with $T = A_0, S = V$. Also, "query-focused update summarization," where we want a summary similar to $Q$ but different from $A_0$, is achieved by setting $S = Q$ and $T = A_0$. 

\noindent \textbf{Simultaneous Query and Privacy Preserving (Query-Irrelevance) Summarization: } A summary similar to $Q$, yet irrelevant to $P$, is achieved by setting $S = Q$ and $T = P$.

\subsection{Instantiations and Examples}
We instantiate the submodular information measures with different submodular functions and present the expressions in each case along with the derivations. Along the way, we also show that a number of previously studied query-focused and update summarization models are in fact special cases of our framework (for an explicit and separate discussion on this we refer the readers to the Appendix). As introduced above, let $V$ be the ground set of data points which need to be summarized and $V^{\prime}$ be the auxiliary dataset from which the query set $Q$ and the private set $P$ are sampled. That is, $Q \subseteq V^{\prime}$ and $P \subseteq V^{\prime}$. We will denote $A \subseteq V$ as a candidate subset of the data points in $V$. 

\textbf{Set Cover (SC) and Probabilistic Set Cover (PSC):}
We begin with the set cover and probabilistic set cover. Recall the set cover function $f(A) = w(\Gamma(A))$ where for every element $i \in \Omega, \Gamma(i)$ denotes the concepts covered by $i$ and $w$ is a weight vector of the concepts. Here we assume that $V$ and $V^{\prime}$ share the same set of concepts. For example, if our set of instances are images, the concepts could be objects and scenes in the image and we can represent it through a set of concept words. Similarly, if our query is a sentence, it can be represented in the same space of concept words. The probabilistic set cover function is $f(A) = \sum_{i \in U} w_i(1 - P_i(A))$ where $U$ is the set of concepts, and $P_i(A) = \prod_{j \in A} (1 - p_{ij})$, i.e. $P_i(A)$ is the probability that $A$ doesn't cover concept $i$. Intuitively, PSC is a soft version of the SC, which allows for probability of covering concepts, instead of a binary yes/no, as is the case with SC. SMI instantiated with SC (hereafter called SCMI) takes the form $$I_f(A;Q) = w(\Gamma(A) \cap \Gamma(Q))$$ This is intuitive since it measures the joint coverage between the concepts covered by $A$ and $Q$. Likewise, CG with SC (SCCondGain) takes the form, $$f(A | P) = w(\Gamma(A) \setminus \Gamma(P))$$ which again is the difference in the set of concepts covered, and finally, CSMI (SCCondMI) is $$I_f(A;Q|P)=w(\Gamma(A) \cap \Gamma(Q) \setminus \Gamma(P))$$  which essentially represents the mutual coverage of concepts shared with $Q$, while removing the private concepts. For probabilistic set cover, PSCMI is $$I_f(A;Q) = \sum_{i \in U} w_i(1-P_i(A))(1-P_i(Q))$$ Similarly, for CG, PSCCondGain is $$f(A|P)=\sum_{i \in U} w_i(1-P_i(A))P_i(P)$$ and CSMI (PSCCondMI) is $$I_f(A;Q|P)=\sum_{i \in U} w_i (1-P_i(A))(1-P_i(Q))P_i(P)$$ The intuition behind SMI and CG of PSC is also very similar to the SC case, except that the notion of coverage is probabilistic.

\begin{lemma}
The SMI instantiated with SC (hereafter called SCMI) takes the form $I_f(A;Q) = w(\Gamma(A) \cap \Gamma(Q))$. Similarly, CG with SC (SCCG) takes the form, $f(A | P) = w(\Gamma(A) \setminus \Gamma(P))$, while CSMI cn be written as $I_f(A;Q|P)=w(\Gamma(A) \cap \Gamma(Q) \setminus \Gamma(P))$.
\end{lemma}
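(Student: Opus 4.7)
The plan is to verify each of the three identities by direct substitution into the definitions of SMI, CG, and CSMI, using two elementary facts about set cover: first, that the concept map $\Gamma$ distributes over unions, i.e., $\Gamma(A \cup B) = \Gamma(A) \cup \Gamma(B)$ (because $\Gamma(A) = \bigcup_{i \in A} \Gamma(i)$); and second, that the concept weight $w$ is modular on the concept universe, so inclusion-exclusion holds with equality: $w(X) + w(Y) - w(X \cup Y) = w(X \cap Y)$ and $w(X \cup Y) - w(Y) = w(X \setminus Y)$.

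For the SMI identity, I would expand
\[
I_f(A;Q) = f(A) + f(Q) - f(A \cup Q) = w(\Gamma(A)) + w(\Gamma(Q)) - w(\Gamma(A) \cup \Gamma(Q)),
\]
using the distributivity of $\Gamma$ on the last term, and then apply modular inclusion-exclusion on $w$ to collapse this to $w(\Gamma(A) \cap \Gamma(Q))$. For the CG identity, I would similarly write
\[
f(A \mid P) = w(\Gamma(A) \cup \Gamma(P)) - w(\Gamma(P)) = w(\Gamma(A) \setminus \Gamma(P)),
\]
where the last equality is the modular identity $w(X \cup Y) - w(Y) = w(X \setminus Y)$.

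For the CSMI identity, the cleanest route is to invoke Lemma~\ref{lemma:cond-MI}, which allows viewing CSMI as the SMI of the conditional gain function $g_P(A) = f(A \mid P)$. Since I have just shown $g_P(A) = w(\Gamma(A) \setminus \Gamma(P))$, which is itself a set-cover-style function against the restricted concept universe $U \setminus \Gamma(P)$ with the same weight $w$, applying the SMI-for-SC identity already established gives
\[
I_f(A;Q \mid P) = I_{g_P}(A;Q) = w\bigl((\Gamma(A)\setminus \Gamma(P)) \cap (\Gamma(Q)\setminus \Gamma(P))\bigr) = w(\Gamma(A) \cap \Gamma(Q) \setminus \Gamma(P)),
\]
using the basic set-theoretic identity $(X\setminus Z) \cap (Y\setminus Z) = (X \cap Y) \setminus Z$.

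There is no real obstacle here: the proof is essentially bookkeeping, and the only point to be careful about is cleanly separating the two algebraic layers, namely the distributivity of $\Gamma$ over $\cup$ at the element level versus the modularity of $w$ at the concept level. The cleanest presentation is to dispatch SCMI and SCCG by direct substitution and then derive SCCondMI from them via Lemma~\ref{lemma:cond-MI}, rather than expanding the four-term definition of CSMI by hand.
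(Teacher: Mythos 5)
Your treatment of the SCMI and SCCG identities is essentially identical to the paper's: expand the definitions, use $\Gamma(A \cup Q) = \Gamma(A) \cup \Gamma(Q)$, and apply modularity of $w$ (inclusion--exclusion with equality). For the CSMI identity, however, you take a genuinely different and cleaner route. The paper expands the four-term definition $f(A|P) + f(Q|P) - f(A \cup Q|P)$ directly, reduces it to $w(\Gamma(A) \cup \Gamma(P)) + w(\Gamma(Q) \cup \Gamma(P)) - w(\Gamma(A) \cup \Gamma(Q) \cup \Gamma(P)) - w(\Gamma(P))$, and then carries out a three-term inclusion--exclusion to reach $w\bigl((\Gamma(A) \cap \Gamma(Q)) \cup \Gamma(P)\bigr) - w(\Gamma(P))$ (a step the paper writes rather informally, mixing set and weight notation). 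You instead invoke Lemma~\ref{lemma:cond-MI} to identify the CSMI with $I_{g_P}(A;Q)$ for $g_P(A) = f(A|P) = w(\Gamma(A) \setminus \Gamma(P))$, observe that $g_P$ is again a set cover function with the modified concept map $i \mapsto \Gamma(i) \setminus \Gamma(P)$, and reuse the already-established SCMI formula together with $(X \setminus Z) \cap (Y \setminus Z) = (X \cap Y) \setminus Z$. Your route is more modular and sidesteps the delicate inclusion--exclusion bookkeeping; the paper's route is self-contained and does not depend on Lemma~\ref{lemma:cond-MI}. Both are correct, and note that for set cover none of these identities actually require the disjointness of $A$, $Q$, and $P$ (in contrast to the probabilistic set cover case), so neither argument has a hidden hypothesis.
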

\begin{proof}
Here $f$ is a set cover function, $f(A) = w(\cup_{a \in A} \Gamma(a))$. Now, $f(A\cup Q) = w(\cup_{c \in A\ \cup Q} \Gamma(c)) $ and we also have $\Gamma(A \cup Q) = \Gamma(A) \cup \Gamma(Q)$ where $\Gamma(A) = \cup_{a \in A} \Gamma(a)$. Thus we have the following result by the inclusion exclusion principle.
\begin{align*}
I_f(A;Q) &= f(A) + f(Q) - f(A \cup Q) \\
 & = w(\Gamma(A)) + w(\Gamma(Q)) - w(\Gamma(A) \cup \Gamma(Q)) \\
 &= w(\Gamma(A) \cap \Gamma(Q))  
\end{align*}
For the Conditional Gain, we have:
\begin{align*}
 f(A|P) &= f(A \cup P) - f(P) \\
  &= w(\Gamma(A) \cup \Gamma(P)) - w(\Gamma(P)) \\ 
  &= w(\Gamma(A) \setminus \Gamma(P))   
\end{align*}
Similarly, we have, for the Conditional Submodular Mutual Information,
\begin{align*}
 f(A;Q|P) &= f(A|P) + f(Q|P) - f(A \cup Q | P) \\
 &= f(A \cup P) - f(P) + f(Q \cup P) - f(P) - f(A \cup Q \cup P) + f(P) \\
 &= f(A \cup P) - f(P) + f(Q \cup P) - f(A \cup Q \cup P)  \\
 &= w(\Gamma(A) \cup \Gamma(P)) - w\Gamma(P) + w(\Gamma(Q) \cup \Gamma(P)) - w(\Gamma(A) \cup \Gamma(Q) \cup \Gamma(P))\\
  &=   [w(\Gamma(A) \cup \Gamma(P)) + w(\Gamma(Q) \cup \Gamma(P)) - w(\Gamma(A) \cup \Gamma(Q) \cup \Gamma(P))] - w\Gamma(P)  \\
  &= w([\Gamma(A) + \Gamma(Q) - \Gamma(A \cup Q)] \cup \Gamma(P)) - w\Gamma(P) \\
  &= w(\Gamma(A) \cap \Gamma(Q) \cup \Gamma(P))  - w\Gamma(P) \\
  &= w(\Gamma(A) \cap \Gamma(Q) \setminus \Gamma(P))
\end{align*}

\end{proof}

\begin{lemma}
The SMI instantiated with PSC (hereafter called PSCMI) takes the form $I_f(A;Q) = \sum_{i \in U} w_i(1-P_i(A))(1-P_i(Q))$. Similarly, CG with PSC (PSCCG) takes the form, $f(A|P)=\sum_{i \in U} w_i(1-P_i(A))P_i(P)$, while the CSMI is $I_f(A;Q|P)=\sum_{i \in U} w_i (1-P_i(A))(1-P_i(Q))P_i(P)$.
\end{lemma}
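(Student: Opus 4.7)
The plan is to mirror the structure of the preceding set-cover lemma, exploiting the multiplicative structure of $P_i(\cdot)$. The key observation is that since $A \subseteq V$ and $Q, P \subseteq V'$ with $V \cap V' = \emptyset$, for any disjoint sets $X, Y$ we have
\[
P_i(X \cup Y) \;=\; \prod_{j \in X \cup Y}(1 - p_{ij}) \;=\; \Bigl(\prod_{j \in X}(1-p_{ij})\Bigr)\Bigl(\prod_{j \in Y}(1-p_{ij})\Bigr) \;=\; P_i(X)\,P_i(Y).
\]
This factorization is the only fact about PSC we really need; everything else is algebra per concept $i \in U$, so I would pull the sum $\sum_{i \in U} w_i$ outside and work concept-by-concept throughout.

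For PSCMI, I would plug $f(A) = \sum_i w_i(1-P_i(A))$ into $I_f(A;Q) = f(A) + f(Q) - f(A \cup Q)$ and apply the factorization to $P_i(A \cup Q) = P_i(A) P_i(Q)$. The per-concept expression then reduces via the identity $(1-a) + (1-b) - (1-ab) = (1-a)(1-b)$, yielding $I_f(A;Q) = \sum_{i \in U} w_i (1-P_i(A))(1-P_i(Q))$ directly. For PSCCG, I would expand $f(A|P) = f(A \cup P) - f(P)$ and again factor $P_i(A \cup P) = P_i(A) P_i(P)$; the per-concept terms collapse to $P_i(P) - P_i(A)P_i(P) = P_i(P)(1-P_i(A))$, giving the stated formula.

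For PSCCondMI, the cleanest route is to invoke Lemma~\ref{lemma:cond-MI}, viewing CSMI as the SMI of the conditional gain function $g_P(A) = f(A|P)$. From the PSCCG step, $g_P(A) = \sum_i w_i P_i(P)(1 - P_i(A))$, which has exactly the same structural form as PSC itself but with reweighted weights $w_i' = w_i P_i(P)$. Applying the PSCMI calculation already established to $g_P$ then immediately gives
\[
I_f(A;Q|P) \;=\; I_{g_P}(A;Q) \;=\; \sum_{i \in U} w_i P_i(P)\,(1-P_i(A))(1-P_i(Q)),
\]
which is the claimed identity. Alternatively one can expand $I_f(A;Q|P) = f(A|P) + f(Q|P) - f(A \cup Q | P)$ directly and factor $P_i(A \cup Q \cup P) = P_i(A) P_i(Q) P_i(P)$.

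There is no real conceptual obstacle; the only place to be careful is verifying the disjointness needed for the multiplicative factorization of $P_i$. In particular the factorization $P_i(A \cup Q) = P_i(A) P_i(Q)$ requires $A \cap Q = \emptyset$, which is guaranteed by the setup $A \subseteq V$, $Q \subseteq V'$ with $V$ and $V'$ disjoint; once this is flagged the rest is a routine per-concept algebraic manipulation using the identity $(1-a)(1-b) = 1-a-b+ab$.
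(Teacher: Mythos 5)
Your proof is correct. For PSCMI and PSCCG you follow exactly the paper's route: expand the definitions, use the factorization $P_i(X \cup Y) = P_i(X)P_i(Y)$ for disjoint $X, Y$, and simplify per concept. The only divergence is in the CSMI step: the paper expands $I_f(A;Q|P) = f(A|P) + f(Q|P) - f(A\cup Q|P)$ directly and substitutes the three conditional-gain expressions, whereas your primary route invokes Lemma~\ref{lemma:cond-MI} to write $I_f(A;Q|P) = I_{g_P}(A;Q)$ and then observes that $g_P(A) = \sum_{i} w_i P_i(P)(1-P_i(A))$ is itself a probabilistic set cover function with reweighted weights $w_i' = w_i P_i(P)$, so the already-proved PSCMI formula applies verbatim. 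This is a clean and slightly more economical argument --- it reuses the first computation instead of redoing the algebra --- and it makes the structural reason for the final product form transparent. One small point you could make explicit (which the paper also leaves implicit): the reweighting step and the factorization $P_i(A \cup Q \cup P) = P_i(A)P_i(Q)P_i(P)$ both require $P$ to be disjoint not only from $A$ but also from $Q$; this holds in the intended setting where $Q$ and $P$ are distinct subsets of the auxiliary set, but it is an assumption, not a consequence of $A \subseteq V$ and $Q, P \subseteq V'$ alone.
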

\begin{proof}
Here $f(A)$ is the probabilistic set cover function: $f(A) = \sum_{i \in U} w_i (1 - \Pi_{a \in A} (1-p_{ia}))$ where $p_{ia}$ is the probability that the element $a \in A$ covers the concept $i$ and $U$ is the set of all concepts. We use $P_i(A) = \Pi_{a \in A} (1-p_{ia})$ which denotes the probability that none of the elements in $A$ cover the concept $i$. Therefore $1 - P_i(A)$ will denote that at least one element in $A$ covers $i$. We have,
\begin{align*}
I_f(A;Q) &= f(A) + f(Q) - f(A \cup Q) \\
&=  \sum_{i \in U} w_i ( 1 - P_i(A) + 1-P_i(Q) - 1 - P_i(A\cup Q)) \\ 
 &=  \sum_{i \in U} w_i (1 - (P_i(A) + P_i(Q) - P_i(A\cup Q))) \\
 \end{align*}
 
 With disjoint $A,Q$ we have: $ P_i(A\cup Q)) = P_i(A) P_i(Q)$ and hence:
 \begin{align*}
     I_f(A;Q) &=  \sum_{i \in U} w_i (1 - (P_i(A) + P_i(Q) - P_i(A) P_i(Q))) \\
     &= \sum_{i \in U} w_i (1 - P_i(A)) (1 - P_i(Q))
 \end{align*}

For the Conditional Gain, we have
\begin{align*}
f(A|P) &= f(A \cup P) - f(P)  \\
&= \sum_{i \in U} w_i [P_i(P) - P_i(A\cup P)] \\ 
&= \sum_{i \in U} w_i [\Pi_{p \in P} (1-p_{ip}) - \Pi_{c \in A \cup P} (1-p_{ic})] \\
&= \sum_{i \in U} w_i [\Pi_{p \in P} (1-p_{ip}) - \Pi_{p \in P} (1-p_{ip}) \Pi_{a' \in A \setminus P} (1-p_{ia'})] \\
&= \sum_{i \in U} w_i [\Pi_{p \in B} (1-p_{ip}) (1 - \Pi_{a' \in A \setminus P} (1-p_{ia'}))] \\
&= \sum_{i \in U} w_i P_i(P)(1 - P_i(A \setminus P))
\end{align*}
Since $A \cap P = \phi$, $A \setminus P=A$, thus, $f(A|P) = \sum_{i \in U} w_i P_i(P)(1 - P_i(A))$

Similarly, for Conditional Submodular Mutual Information using probabilistic Set Cover, we have,
\begin{align*}
f(A;Q|P) &= f(A|P) + f(Q|P) - f(A \cup Q | P) \\
 &= \sum_{i \in U} w_i P_i(P)(1 - P_i(A)) + \sum_{i \in U} w_i P_i(P)(1 - P_i(Q)) - \sum_{i \in U} w_i P_i(P)(1 - P_i(A \cup Q)) \\
 &= \sum_{i \in U}w_i P_i(P)[(1-P_i(A)) + (1-P_i(Q)) - (1-P_i(A)P_i(Q)] \\
 &= \sum_{i \in U}w_i P_i(P)[1-P_i(A)-P_i(Q)+P_i(A)P_i(Q)] \\
 &= \sum_{i \in U}w_i P_i(P)(1-P_i(A))(1-P_i(Q))
\end{align*}

\end{proof}

\textbf{Graph Cut Family (GC):} Another important function is  {\it generalized graph cut}: $f(A) = \sum_{i \in A, j \in V} s_{ij} - \lambda \sum_{i, j \in A} s_{ij}$. The parameter $\lambda$ captures the trade-off between diversity and representativeness. The SMI instantiated with Graph Cut (GCMI) takes the following form: $$I_f(A;Q)=2\lambda \sum_{i \in A} \sum_{j \in Q} s_{ij}$$ It is easy to see that maximizing this function will yield a summary which has a high joint pairwise sum with the query set. In fact, this model has been used in several query-focused summarization works for document summarization~\cite{lin2012submodularity, li2012multi} and video summarization~\cite{vasudevan2017query}, without the authors acknowledging it as form of a submodular mutual information. This re-emphasizes the naturalness of SMI for query-focused summarization. Next, we instantiate the CG with graph cut (GCCondGain). The expression is $$f(A|P)=\sum_{i \in V} \sum_{j \in A} s_{ij} - \lambda \sum_{i, j \in A} s_{ij} - 2 \lambda \sum_{i \in A, j \in P} s_{ij} = f(A) - 2 \lambda \sum_{i \in A, j \in P} s_{ij}$$ Again, the expression is informative. By maximizing $f_{\lambda}(A | P)$, we maximize $f_{\lambda}(A)$ (i.e., the set $A$ should be diverse) while minimizing the SMI between $A$ and $P$ (the sets $A$ and $P$ should be different from each other). We may define a slight generalization of this function, particularly with a goal of being able to handle privacy preserving summarization. As shown in the next section, just maximizing $f_{\lambda}(A | P)$ the way it is currently defined may not penalize private instances enough. A simple solution is to modify the similarity matrix slightly. In particular, the similarity matrix comprises  pairs of elements in $V$ and $V^{\prime}$, and the cross pairs of the elements from $V$ to $V^{\prime}$. We keep the similarity within elements of $V$ and similarity within elements of $V^{\prime}$ unchanged, but we multiply the cross similarity by a $\nu$. The resulting expression, in the case of the graph cut, becomes  
 $$f_{\nu}(A|P)=\sum_{i \in V} \sum_{j \in A} s_{ij} - \lambda \sum_{i, j \in A} s_{ij} - 2 \lambda \sum_{i \in A, j \in P} \nu s_{ij}$$ Because of the way it is constructed, it is still submodular and depending on the value of $\nu$, we can explicitly control the amount of privacy sensitivity. We demonstrate this effect in the next subsection. Finally, we mention that the conditional gain for the graph cut function was used in update summarization~\cite{li2012multi}, further indicating CG is natural for modeling privacy sensitivity, irrelevance, and update summarization. The CGMI expression for GC doesn't make sense mathematically and is not used.\looseness-1  
 
\begin{lemma}
The SMI instantiated with Graph Cut (GCMI) takes the following form: $I_f(A;Q)=2\lambda \sum_{i \in A} \sum_{j \in Q} s_{ij}$. Similarly, Conditional Gain takes the form $f_{\lambda}(A|P)=\sum_{i \in V} \sum_{j \in A} s_{ij} - \lambda \sum_{i, j \in A} s_{ij} - 2 \lambda \sum_{i \in A, j \in P} s_{ij}$ which is same as $f_{\lambda}(A) - 2 \lambda \sum_{i \in A, j \in P} s_{ij}$. The expression for Conditional Submodular Mutual Information doesn't give a meaningful expression and hence doesn't make sense.
\end{lemma}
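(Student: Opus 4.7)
The plan is to obtain all three expressions by direct substitution of the generalized graph cut definition $f(A)=\sum_{i \in A,j \in V} s_{ij} - \lambda \sum_{i,j \in A} s_{ij}$ into the definitions of SMI, CG, and CSMI, and then to simplify, exploiting the crucial fact that $A \subseteq V$ and $Q,P \subseteq V^{\prime}$ are pairwise disjoint. Throughout, the one identity doing the real work is that for disjoint sets $X,Y$,
\[
\sum_{i,j \in X \cup Y} s_{ij} \;=\; \sum_{i,j \in X} s_{ij} + \sum_{i,j \in Y} s_{ij} + 2\sum_{i \in X, j \in Y} s_{ij},
\]
which is what produces the cross-term $2\lambda \sum_{i \in A, j \in Q} s_{ij}$ (respectively $2\lambda \sum_{i \in A, j \in P} s_{ij}$) after the diagonal pieces cancel.

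First I would compute $I_f(A;Q) = f(A)+f(Q)-f(A\cup Q)$. The linear pieces $\sum_{i \in \cdot, j \in V} s_{ij}$ telescope to zero because $A$ and $Q$ are disjoint, so only the quadratic terms remain; applying the disjoint-union identity above to $\sum_{i,j \in A \cup Q} s_{ij}$ yields the stated GCMI formula $I_f(A;Q) = 2\lambda \sum_{i \in A}\sum_{j \in Q} s_{ij}$.

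Next I would compute $f(A\mid P) = f(A \cup P) - f(P)$. The linear sum contributes $\sum_{i \in A, j \in V} s_{ij}$ after cancellation, and the quadratic sum expands by the disjoint-union identity into $\sum_{i,j \in A} s_{ij} + 2\sum_{i \in A, j \in P} s_{ij}$ once the $\sum_{i,j \in P} s_{ij}$ cancels with $f(P)$. Collecting terms gives precisely the GCCondGain expression, which can evidently be rewritten as $f_\lambda(A) - 2\lambda \sum_{i \in A, j \in P} s_{ij}$.

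Finally, for CSMI I would substitute the three conditional gains $f(A\mid P), f(Q\mid P), f(A\cup Q\mid P)$ into $I_f(A;Q\mid P)$. The $P$-dependent cross-terms $-2\lambda\sum_{i \in A, j \in P} s_{ij} - 2\lambda\sum_{i \in Q, j \in P} s_{ij} + 2\lambda\sum_{i \in A \cup Q, j \in P} s_{ij}$ cancel because $A, Q$ are disjoint, leaving exactly the GCMI expression $2\lambda\sum_{i \in A, j \in Q} s_{ij}$ with no dependence on $P$ at all. This is what the lemma means by CSMI ``not giving a meaningful expression'': conditioning on the private set has literally no effect, so graph cut cannot be used for the joint query/privacy objective. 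The main step requiring care is simply bookkeeping the signed union/intersection terms across three expansions — there is no conceptual obstacle, but tracking that the $P$-cross-terms vanish (and only those) is the bit easiest to misstate.
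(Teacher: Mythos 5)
Your proposal is correct and follows essentially the same route as the paper: direct substitution of the generalized graph cut into the definitions, with the disjoint-union identity producing the cross-terms for GCMI and GCCondGain. For CSMI the paper phrases the cancellation via $I_g$ with $g(\cdot)=f(\cdot\mid P)$ rather than expanding the three conditional gains directly, but this is the same calculation and reaches the same conclusion that the $P$-dependence vanishes entirely.
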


\begin{proof}
For generalized graph cut set function, $f(A) = \sum_{i \in V} \sum_{j \in A} s_{ij} - \lambda \sum_{i,j \in A} s_{ij}$. Thus, 
\begin{align*}
I_f(A;Q) &= f(A) + f(Q) - f(A \cup Q) \\ 
&= \sum_{i \in V} \sum_{j \in A} s_{ij} - \lambda \sum_{i,j \in A} s_{ij} 
+ \sum_{i \in V} \sum_{j \in Q} s_{ij} - \lambda \sum_{i,j \in Q} s_{ij}
- \sum_{i \in V} \sum_{j \in A \cup Q} s_{ij} + \lambda \sum_{i,j \in A \cup Q} s_{ij}  \\
\end{align*}
Since $A$ and $Q$ are disjoint, $\sum_{j \in A \cup Q}$ can be broken down as 
$\sum_{j \in A} + \sum_{j \in Q}$ and hence,
\begin{align*}
I_f(A;Q)&= \sum_{i \in V} \sum_{j \in A} s_{ij} - \lambda \sum_{i,j \in A} s_{ij} 
+ \sum_{i \in V} \sum_{j \in Q} s_{ij} - \lambda \sum_{i,j \in Q} s_{ij}
- \sum_{i \in V} \sum_{j \in A} s_{ij} - \sum_{i \in V} \sum_{j \in Q} s_{ij} + \lambda \sum_{i,j \in A \cup Q} s_{ij} \\
&= - \lambda \sum_{i,j \in A} s_{ij} - \lambda \sum_{i,j \in Q} s_{ij} + \lambda \sum_{i,j \in A}s_{ij} + \lambda \sum_{i,j \in Q}s_{ij} + 2\lambda \sum_{i \in A}\sum_{j \in Q}s_{ij} \\
&= 2\lambda \sum_{i \in A}\sum_{j \in Q}s_{ij}
\end{align*}

For the Conditional Gain we derive the expression as follows:

\begin{align*}
    f(A|P) &= f(A \cup P) - f(P) \\
    &= \sum_{i \in V} \sum_{j \in A \cup P} s_{ij} - \lambda \sum_{i,j \in A \cup P} s_{ij} - \sum_{i \in V} \sum_{j \in P} s_{ij} + \lambda \sum_{i,j \in P} s_{ij}  \\
\end{align*}

Since $A \cap P = \phi$ we break down $\sum_{j \in A \cup P}$ alternatively as $\sum_{j \in A - P} + \sum_{j \in P}$ and obtain,

\begin{align*}
    f(A|P) &= \sum_{i \in V} \sum_{j \in A - P} s_{ij} + \sum_{i \in V} \sum_{j \in P} s_{ij} -\lambda \sum_{i,j \in A}s_{ij} - \lambda \sum_{i,j \in P}s_{ij} - 2\lambda \sum_{i \in A}\sum_{j \in P}s_{ij} - \sum_{i \in V} \sum_{j \in P} s_{ij} + \lambda \sum_{i,j \in P} s_{ij} \\
    &= \sum_{i \in V} \sum_{j \in A - P} s_{ij} -\lambda \sum_{i,j \in A}s_{ij} - 2\lambda \sum_{i \in A}\sum_{j \in P}s_{ij} \\
    &= \sum_{i \in V} \sum_{j \in A} s_{ij} -\lambda \sum_{i,j \in A}s_{ij} - 2\lambda \sum_{i \in A}\sum_{j \in P}s_{ij} \\
    &= f(A) - 2\lambda \sum_{i \in A}\sum_{j \in P}s_{ij}
\end{align*}

For deriving the expression for Conditional Submodular Mutual Information we proceed as follows,

Let $$g(A) = f(A|P) = f(A) - 2\lambda \sum_{i \in A}\sum_{j \in P}s_{ij}$$  
Then, 
\begin{align*}
    f(A;Q|P) &= I_g(A;Q) \\
    &= I_f(A;Q) - 2\lambda \sum_{i \in A \cap Q, j \in P}s_{ij}
\end{align*}

Since $A, Q$ are disjoint, the second term is 0 and the first term doesn't have any effect of $P$. Thus, the Conditional Submodular Mutual Information for Graph Cut doesn't make any sense.
\end{proof}

\textbf{Log Determinant (LogDet): } Given a positive semi-definite kernel matrix $L$, denote $L_A$ as the subset of rows and columns indexed by the set $A$. The log-determinant function is $f(A) = \log\det(L_A)$. The log-det function models diversity, and is closely related to a determinantal point process~\cite{kulesza2012determinantal}. Furthermore, denote $S_{AQ}$ as the pairwise similarity matrix between the items in $A$ and $Q$. Then, $$I_f(A; Q) = -\log \det(I - S_A^{-1}S_{AQ}S_Q^{-1}S_{AQ}^T) = \log\det(S_A) - \log\det(S_A - S_{AQ}S_{Q}^{-1}S_{AQ}^T)$$ and we call it LogDetMI. Note that using Neuman series approximation, we can approximate $I_f(A; Q) \approx \log \det(I + S_A^{-1}S_{AQ}S_Q^{-1}S_{AQ}^T)$. Furthermore, if we only consider the data-query similarity, or in other words, define $S_A = I_A, S_Q = I_Q$, we have $I_f(A; Q) \approx \log \det(I + S_{AQ}S_{AQ}^T)$. This term is very similar to the query-focused summarization model used in~\cite{sharghi2016query}. Likewise, the CG function (LogDetCondGain) can be written as $$f(A | P) = \log\det(S_A - S_{AP}S_{P}^{-1}S_{AP}^T)$$ Similar to the graph cut case, we multiply the similarity kernel $S_{AP}$ by $\nu$, to obtain $$f_{\nu}(A | P) = \log\det(S_A - \nu^2 S_{AP}S_{P}^{-1}S_{AP}^T)$$ to control the degree of privacy constraint. Larger values of $\nu$ will encourage privacy sensitive summaries as also demonstrated in subsection below. The CGMI expression for LogDet can be written as $I_f(A; Q | P) = \log \frac{\det(I - S_{P}^{-1} S_{P, Q} S_Q^{-1} S_{P, Q}^T)}{\det(I - S_{A \cup P}^{-1} S_{A \cup P, Q} S_Q^{-1} S_{A \cup P, Q}^T)}$.

\begin{lemma}
Denote $S_{AQ}$ as the pairwise similarity matrix between the items in $A$ and $Q$. Then, $I_f(A; Q) = -\log \det(I - S_A^{-1}S_{AQ}S_Q^{-1}S_{AQ}^T) = \log\det(S_A) - \log\det(S_A - S_{AQ}S_{Q}^{-1}S_{AQ}^T)$. The Conditional Gain takes the form $f(A | P) = \log\det(S_A - S_{AP}S_{P}^{-1}S_{AP}^T)$ and the Conditional Submodular Mutual Information can be written as $I_f(A; Q | P) = \log \frac{\det(I - S_{P}^{-1} S_{P, Q} S_Q^{-1} S_{P, Q}^T)}{\det(I - S_{A \cup P}^{-1} S_{A \cup P, Q} S_Q^{-1} S_{A \cup P, Q}^T)}$
\end{lemma}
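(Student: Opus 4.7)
The plan is to derive all three expressions uniformly by invoking the Schur complement identity for the block representation of $S_{A \cup B}$ whenever $A, B$ are disjoint index sets. Concretely, writing
\begin{equation*}
S_{A \cup B} \;=\; \begin{pmatrix} S_A & S_{AB} \\ S_{AB}^T & S_B \end{pmatrix},
\end{equation*}
the Schur complement identity gives $\det(S_{A \cup B}) = \det(S_B)\det(S_A - S_{AB} S_B^{-1} S_{AB}^T)$, which is the workhorse for every step below. The overall strategy is: start from the definitions of $I_f$, $f(A\mid P)$, $I_f(A;Q\mid P)$ that have already been established in the excerpt, substitute $f(X) = \log\det(S_X)$, and rewrite each joint log-determinant via the Schur complement.

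\textbf{Step 1 (SMI).} I would start with $I_f(A;Q) = \log\det(S_A) + \log\det(S_Q) - \log\det(S_{A \cup Q})$. Applying Schur complement to $S_{A \cup Q}$ yields $\log\det(S_{A \cup Q}) = \log\det(S_Q) + \log\det(S_A - S_{AQ} S_Q^{-1} S_{AQ}^T)$, so the $\log\det(S_Q)$ cancels and I immediately obtain the second form $\log\det(S_A) - \log\det(S_A - S_{AQ} S_Q^{-1} S_{AQ}^T)$. Factoring $S_A$ out of the second log-determinant as $\log\det(S_A) + \log\det(I - S_A^{-1} S_{AQ} S_Q^{-1} S_{AQ}^T)$ collapses the expression to $-\log\det(I - S_A^{-1} S_{AQ} S_Q^{-1} S_{AQ}^T)$, giving the first form.

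\textbf{Step 2 (CG).} By definition $f(A\mid P) = \log\det(S_{A \cup P}) - \log\det(S_P)$. Applying the Schur complement identity to $S_{A \cup P}$ with the $P$-block pivoted out gives $\log\det(S_P) + \log\det(S_A - S_{AP} S_P^{-1} S_{AP}^T)$, and the $\log\det(S_P)$ cancels, yielding the claimed expression.

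\textbf{Step 3 (CSMI).} Using Lemma~\ref{lemma:cond-MI}, I can write $I_f(A;Q\mid P) = I_f(A\cup P; Q) - I_f(P; Q)$. Applying Step~1 to each term (with $A$ replaced by $A\cup P$ and $P$ respectively, all conditioned on $Q$) produces
\begin{equation*}
I_f(A;Q\mid P) \;=\; -\log\det\!\bigl(I - S_{A\cup P}^{-1} S_{A\cup P, Q} S_Q^{-1} S_{A\cup P, Q}^T\bigr) + \log\det\!\bigl(I - S_P^{-1} S_{PQ} S_Q^{-1} S_{PQ}^T\bigr),
\end{equation*}
which is exactly the claimed log-ratio. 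I expect the only mild obstacle to be bookkeeping: making sure that $A$, $Q$, $P$ are treated as mutually disjoint so that the block decomposition is valid and the Schur complement applies cleanly (otherwise one would have to worry about repeated rows/columns in $L_A$). This is implicit throughout the section since $Q, P \subseteq V'$ and $A \subseteq V$, and $V \cap V' = \emptyset$ is the standing assumption, so no additional work is required there.
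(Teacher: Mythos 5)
Your proof is correct and follows essentially the same route as the paper's: both reduce every expression to the Schur-complement factorization $\det(S_{A\cup B}) = \det(S_B)\det(S_A - S_{AB}S_B^{-1}S_{AB}^T)$ and obtain the CSMI formula from the identity $I_f(A;Q\mid P) = I_f(A\cup P;Q) - I_f(P;Q)$. The only cosmetic difference is in the conditional gain step, where the paper goes through $f(A\mid P) = f(A) - I_f(A;P)$ while you apply the Schur complement directly to $\det(S_{A\cup P})/\det(S_P)$; the two computations are interchangeable.
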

\begin{proof}
Given a positive semi-definite matrix $S$, the Log-Determinant Function is $f(A) = \log \det(S_A)$ where $S_A$ is a sub-matrix comprising of the rows and columns indexed by $A$. The following expressions follow directly from the definitions. The SMI is: $I_f(A; Q) = \log \frac{\det(S_A) \det(S_Q)}{\det(S_{A \cup Q})}$, CG is: $f(A | P) = \log \frac{\det(S_{A \cup P})}{\det(S_P)}$ and CSMI is $I_f(A; Q | P) = \log \frac{\det(S_{A \cup P}) \det(S_{Q \cup P})}{\det(S_{A \cup Q \cup P}) \det(S_P)}$.

Next, note that using the Schur's complement, $\det(S_{A \cup B}) = \det(S_A) \det(S_{A \cup B} \backslash S_A)$ where,
$$S_{A \cup B} \backslash S_A = S_B - S_{AB}^T S_A^{-1} S_{AB}$$
where $S_{AB}$ is a $|A| \times |B|$ matrix and includes the cross similarities between the items in sets $A$ and $B$. Similarly, 
$$S_{A \cup B} \backslash S_B = S_A - S_{AB} S_B^{-1} S_{AB}^T$$
As a result, the Mutual Information becomes:
\begin{align*}
    I_f(A; Q) &= -\log [\det(S_A - S_{AQ} S_Q^{-1} S_{AQ}^T) \det(S_A^{-1})] \\
            &= -\log \det(I - S_{AQ} S_Q^{-1} S_{AQ}^T S_A^{-1}) \\
            &= -\log \det(I - S_A^{-1} S_{AQ} S_Q^{-1} S_{AQ}^T )
\end{align*}
For the CG,  
\begin{align*}
    f(A | P) &= f(A) - I_f(A; P) \\
    &= \log \det(S_A) - \log\det(S_A) + \log \det(S_A - S_{AQ} S_Q^{-1} S_{AQ}^T) \\
    &= \log \det(S_A - S_{AQ} S_Q^{-1} S_{AQ}^T)
\end{align*}

Similarly, the proof of the conditional submodular mutual information follows from the simple observation that:
$$I_f(A; Q | P) = I_f(A \cup P; Q) - I_f(Q; P)$$
Plugging in the expressions of the mutual information of the log-determinant function, from above, we have,
\begin{align*}
    I_f(A \cup P; Q) &= -\log \det(I - S_{A \cup P}^{-1} S_{A \cup PQ} S_Q^{-1} S_{A \cup P Q}^T ) \\
    I_f(Q; P) &= -\log \det(I - S_Q^{-1} S_{QP} S_P^{-1} S_{QP}^T ) \\
    \therefore I_f(A; Q | P) &= \log \det(I - S_{P}^{-1} S_{P, Q} S_Q^{-1} S_{P, Q}^T) - \log \det(I - S_{A \cup P}^{-1} S_{A \cup P, Q} S_Q^{-1} S_{A \cup P, Q}^T) \\
    &= \log \frac{\det(I - S_{P}^{-1} S_{P, Q} S_Q^{-1} S_{P, Q}^T)}{\det(I - S_{A \cup P}^{-1} S_{A \cup P, Q} S_Q^{-1} S_{A \cup P, Q}^T)}
\end{align*}
\end{proof}

\textbf{Facility Location (FL): } Another important function for summarization is  facility location. Given a similarity kernel $S$ and a set $U \subseteq \Omega$, the FL function is $f(A) = \sum_{i \in U} \max_{j \in A} s_{ij}, A \subseteq \Omega$. Depending on the choice of $U$, we have two variants of FL. In the first variant (FL1MI), we set $U$ to be $V$, and this gives us $$I_f(A;Q)=\sum_{i \in V}\min(\max_{j \in A}s_{ij}, \max_{j \in Q}s_{ij})$$ 
Analogous to the GCCondGain and LogDetCondGain case, we can up-weigh the query-kernel similarity (say by $\eta$). FL1MI then takes the form: $$I_{f_{\eta}}(A;Q)=\sum_{i \in V}\min(\max_{j \in A}s_{ij}, \eta \max_{j \in Q}s_{ij})$$ Here $\eta$ serves to model a trade-off between query relevance and diversity. We demonstrate the effect of $\eta$ in the next subsection. Next, we consider another variant of FL with $U = \Omega$. 
Additionally we assume that the similarity matrix $S$ is such that $s_{ij} = I(i == j)$, if both $i, j \in V$ or both $i, j \in V^{\prime}$. In other words, we only have the cross similarity between the query and data items. SMI instantiated with this (FL2MI) takes the form $$I_{f_{\eta}}(A; Q) = \sum_{i \in Q} \max_{j \in A} s_{ij} + \eta \sum_{i \in A} \max_{j \in Q} s_{ij}$$ This SMI is also very intuitive. It basically measures the representation of the data points with respect to the query and vice-versa. Moreover, $\eta$ models the trade-off between query similarity and diversity. When $\eta = 0$, it encourages diversity among the query relevant items and when it is very high it encourages sets which are more query-relevant. For Conditional Gain with facility location (FLCondGain), we have, $$f(A|P)= \sum_{i \in V} \max(\max_{j \in A} s_{ij} - \max_{j \in P} s_{ij}, 0)$$ Again, similar to GCCondGain and LogDetCondGain, we can multiply the similarity between the data and private set by $\nu$, to decrease the likelihood of choosing the private elements, giving us $$f_{\nu}(A | P) = \sum_{i \in V} \max(\max_{j \in A} s_{ij} - \nu \max_{j \in P} s_{ij}, 0)$$ Finally, we can also obtain the CGMI expression of $I_f(A; Q | P)$ for FL as $$I_f(A; Q | P) = \sum_{i \in U} \max(\min(\max_{j \in A} s_{ij}, \max_{j \in Q} s_{ij}) - \max_{j \in P} s_{ij}, 0)$$

\begin{theorem}\label{FLMIGen}
Given a similarity kernel $S$, a set $U \subseteq \Omega$ and the facility location (FL) function $f(A) = \sum_{i \in U} \max_{j \in A} s_{ij}, A \subseteq \Omega$ the Mutual Information for FL can be written as $I_f(A;Q)=\sum_{i \in U}\min(\max_{j \in A}s_{ij}, \max_{j \in Q}s_{ij})$.  Similarly, the CG for facility location can be written as $f(A|P)= \sum_{i \in U} \max(\max_{j \in A} s_{ij} - \max_{j \in P} s_{ij}, 0)$ and the expression for Conditional Submodular Mutual Information can be written as: $I_f(A; Q | P) = \sum_{i \in U} \max(\min(\max_{j \in A} s_{ij}, \max_{j \in Q} s_{ij}) - \max_{j \in P} s_{ij}, 0)$.
\end{theorem}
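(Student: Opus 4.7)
The proof plan is to exploit the fact that the facility location function decomposes additively over $i \in U$: $f(A) = \sum_{i \in U} \max_{j \in A} s_{ij}$, so every quantity we care about (SMI, CG, CSMI) will split into a sum of per-$i$ contributions. I would fix $i \in U$ and write $a_i = \max_{j \in A} s_{ij}$, $q_i = \max_{j \in Q} s_{ij}$, $p_i = \max_{j \in P} s_{ij}$, together with the basic observation that $\max_{j \in A \cup B} s_{ij} = \max(a_i^A, a_i^B)$ whenever the maxes are taken over unions. Everything then reduces to manipulating elementary $\min$/$\max$ identities.

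For the SMI, I would expand $I_f(A;Q) = f(A) + f(Q) - f(A \cup Q)$, push the sum outside, and invoke the identity $x + y - \max(x,y) = \min(x,y)$ on the per-$i$ summand to obtain $\sum_{i \in U} \min(a_i, q_i)$. For the CG, I would expand $f(A|P) = f(A \cup P) - f(P)$, decompose $\max_{j \in A \cup P} s_{ij} = \max(a_i, p_i)$, and use the identity $\max(x,y) - y = \max(x-y, 0)$ to obtain $\sum_{i \in U} \max(a_i - p_i, 0)$.

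For the CSMI, the cleanest route is to invoke Lemma~\ref{lemma:cond-MI}, which gives $I_f(A;Q|P) = I_{g_P}(A;Q)$ where $g_P(A) = f(A|P)$. Since $g_P$ is itself a sum over $i \in U$ of terms $\max(a_i - p_i, 0)$, I can reuse the SMI argument applied to $g_P$: the per-$i$ contribution becomes $\min(\max(a_i - p_i, 0), \max(q_i - p_i, 0))$. The main step — and the only slightly delicate one — is then to verify the identity
\[
\min\!\bigl(\max(a_i - p_i, 0),\ \max(q_i - p_i, 0)\bigr) \;=\; \max\!\bigl(\min(a_i, q_i) - p_i,\ 0\bigr),
\]
which I would do by a short three-case analysis on whether $a_i, q_i$ lie above or below $p_i$. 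In the case both lie above $p_i$ both sides equal $\min(a_i, q_i) - p_i$; in the case both lie below $p_i$ both sides are $0$; and if exactly one lies below $p_i$ both sides are again $0$. Summing over $i \in U$ yields the claimed expression.

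The main obstacle is really only this last identity; the SMI and CG derivations are essentially inclusion–exclusion on a sum of maxes and require no more than the two scalar identities mentioned above. Care is needed on one small point: the CG derivation implicitly uses $A \cap P = \emptyset$ (or at least the fact that $\max_{j \in A \cup P} s_{ij} = \max(a_i, p_i)$, which holds unconditionally), so no disjointness assumption is actually required beyond what is built into the definitions.
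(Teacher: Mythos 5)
Your proposal is correct and follows essentially the same route as the paper: decompose per $i \in U$, use $x + y - \max(x,y) = \min(x,y)$ for the SMI and $\max(x,y) - y = \max(x-y,0)$ for the CG, and for the CSMI expand $f(A|P) + f(Q|P) - f(A\cup Q|P)$ (your detour through Lemma~\ref{lemma:cond-MI} yields literally this same expression) and simplify. The only cosmetic difference is the last step, where the paper argues that $\max(\max(a,b)-c,0)$ cancels one of the two preceding terms, while you verify the equivalent identity $\min(\max(a-c,0),\max(b-c,0)) = \max(\min(a,b)-c,0)$ by a three-case analysis; both close the argument.
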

\begin{proof}
Here we have the facility location set function, $f(A) = \sum_{i \in U} \max_{j \in A} s_{ij}$ where $s$ is similarity kernel and $U \subseteq \Omega$. Then,
\begin{align*}
I_f(A;Q) &= f(A) + f(Q) - f(A \cup Q) \\ 
&= \sum_{i \in U} \max_{j \in A} s_{ij} + \max_{j \in Q} s_{ij} - \max_{j \in A \cup Q} s_{ij} \\
&= \sum_{i \in U} \max_{j \in A} s_{ij} + \max_{j \in Q} s_{ij} - \max(\max_{j \in A} s_{ij}, \max_{j \in Q} s_{ij}) \\
&= \sum_{i \in U} \min(\max_{j \in A} s_{ij}, \max_{j \in Q} s_{ij})
\end{align*}


    

For the Conditional Gain we have
\begin{align*}
    f(A|P) &= \sum_{i \in U} \max(\max_{j \in A} s_{ij}, \max_{j \in P} s_{ij}) -  \max_{j \in P} s_{ij} \\  
 &= \sum_{i \in U} \max(0, \max_{j \in A} s_{ij} - \max_{j \in P} s_{ij})
\end{align*}
Finally, we can get the expression for $I_f(A; Q | P)$ as
\begin{align*}
    I_f(A; Q | P) &= f(A|P) + f(Q|P) - f(A \cup Q | P) \\
    &= \sum_{i in U}[\max(\max_{j in A}s_{ij} - \max_{j in P}s_{ij}, 0) + \max(\max_{j \in Q}s_{ij} - \max_{j \in P}s_{ij}, 0) - \max(\max_{j \in A \cup Q}s_{ij} - \max_{j \in P}s_{ij}, 0)] \\
    &= \sum_{i \in U} \max(\min(\max_{j \in A} s_{ij}, \max_{j \in Q} s_{ij}) - \max_{j \in P} s_{ij}, 0)
\end{align*}
The last step follows from the observation that in $\max(a-c, 0) + \max(b-c, 0) - \max(\max(a,b)-c,0)$ the last term is either the first term or the second term (hence cancelling that out) depending on whether $a > b$ or not.
\end{proof}
Now, we can obtain the expression of FL1MI, FL2MI, FLCG and FLCSMI as special cases.
\begin{corollary}
Setting $U = V$ in the expression of SMI, CG and CSMI in Theorem~\ref{FLMIGen}, we obtain the expression for FL1MI as $I_f(A;Q)=\sum_{i \in V}\min(\max_{j \in A}s_{ij}, \max_{j \in Q}s_{ij})$, FLCG as $f(A|P)= \sum_{i \in V} \max(\max_{j \in A} s_{ij} - \max_{j \in P} s_{ij}, 0)$ and FLCSMI as $I_f(A; Q | P) = \sum_{i \in V} \max(\min(\max_{j \in A} s_{ij}, \max_{j \in Q} s_{ij}) - \max_{j \in P} s_{ij}, 0)$
\end{corollary}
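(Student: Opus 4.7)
The plan is very short: the corollary is a direct specialization of Theorem \ref{FLMIGen}. That theorem already derives the closed-form expressions for $I_f(A;Q)$, $f(A\mid P)$, and $I_f(A;Q\mid P)$ for the facility location function $f(A) = \sum_{i \in U} \max_{j \in A} s_{ij}$ with an \emph{arbitrary} index set $U \subseteq \Omega$. So my proof is to simply instantiate the three formulas at $U = V$ and observe that each outer sum $\sum_{i \in U}$ becomes $\sum_{i \in V}$, while the inner terms $\max_{j \in A} s_{ij}$, $\max_{j \in Q} s_{ij}$, $\max_{j \in P} s_{ij}$ are unchanged because they depend only on $A$, $Q$, $P$ and the kernel entries, not on $U$. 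Matching the resulting three expressions against the definitions of FL1MI, FLCG, and FLCSMI gives the claim.

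Before writing the one-line proof, I would perform a quick sanity check on the hypothesis of Theorem \ref{FLMIGen} to make sure the substitution $U = V$ is admissible. The theorem requires only $U \subseteq \Omega = V \cup V'$, and $V \subseteq \Omega$ trivially, so no additional assumption (such as disjointness of $U$ from $A$, $Q$, or $P$) is needed. Inspecting the derivation shows that the only algebraic facts used are $\max_{j \in A \cup Q} s_{ij} = \max(\max_{j \in A} s_{ij}, \max_{j \in Q} s_{ij})$ and the elementary identity $a + b - \max(a,b) = \min(a,b)$, both applied termwise for each $i$; none of this cares about how $U$ relates to $A$, $Q$, $P$.

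There is essentially no obstacle here: the corollary is a substitution result, not an independent computation. If desired, one could alternatively re-derive the three identities from scratch in this special case by writing out $f(A) = \sum_{i \in V} \max_{j \in A} s_{ij}$ and repeating the same $I_f$, $f(\cdot\mid P)$, and $I_f(\cdot;\cdot\mid P)$ manipulations as in the proof of Theorem \ref{FLMIGen}, but this would merely duplicate work. Accordingly, I would present the proof in a single sentence noting the substitution $U \leftarrow V$ in Theorem \ref{FLMIGen}, and state that the three claimed expressions follow immediately.
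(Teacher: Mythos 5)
Your proposal is correct and matches the paper exactly: the paper's own justification is the single sentence that the corollary ``follows directly from Theorem~\ref{FLMIGen}'', i.e.\ the same substitution $U \leftarrow V$ you describe. Your additional sanity check that the theorem's derivation never uses any relation between $U$ and $A, Q, P$ is a reasonable (if unnecessary) extra precaution.
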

This corollary follows directly from Theorem~\ref{FLMIGen}. 

We can similarly, also obtain the expression for FL2MI.
\begin{corollary}
With $U = \Omega$, and the similarity matrix $S$ is such that $s_{ij} = I(i == j)$, if both $i, j \in V$ or both $i, j \in V^{\prime}$, we obtain the expression for FL2MI as $I_f(A; Q) = \sum_{i \in Q} \max_{j \in A} s_{ij} + \eta \sum_{i \in A} \max_{j \in Q} s_{ij}$
\end{corollary}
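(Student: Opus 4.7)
The plan is to specialize the general FL SMI formula from Theorem~\ref{FLMIGen}, namely $I_f(A;Q) = \sum_{i \in U}\min(\max_{j \in A}s_{ij}, \max_{j \in Q}s_{ij})$, to the case $U = \Omega = V \cup V'$ and then exploit the structural assumption that the similarity kernel $S$ restricted to pairs within $V$ (or within $V'$) is the identity. Everything else reduces to bookkeeping over which side of $\Omega$ the outer index $i$ lies on. The $\eta$ factor is not produced by the derivation itself; as in the other FL/GC/LogDet discussions earlier in the paper, it is introduced by hand as a tunable re-weighting of one of the two cross-similarity terms to trade off query relevance against diversity, and I will note this at the end.

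First, I would split the sum over $i \in \Omega$ into a sum over $i \in V$ and a sum over $i \in V'$. For $i \in V$, the inner maximum $\max_{j \in A} s_{ij}$ is taken over $A \subseteq V$, and by the assumption $s_{ij} = I(i = j)$ on $V \times V$, this equals $1$ if $i \in A$ and $0$ otherwise. The other maximum, $\max_{j \in Q} s_{ij}$ with $Q \subseteq V'$, is a genuine cross-similarity lying in $[0,1]$. Therefore $\min(\max_{j \in A} s_{ij}, \max_{j \in Q} s_{ij}) = \mathbf{1}[i \in A]\cdot\max_{j \in Q} s_{ij}$, and the $i \in V$ contribution collapses to $\sum_{i \in A} \max_{j \in Q} s_{ij}$.

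Next, by a symmetric argument for $i \in V'$: now $\max_{j \in Q} s_{ij}$ is taken over $Q \subseteq V'$ against the identity block on $V' \times V'$, giving $\mathbf{1}[i \in Q]$, while $\max_{j \in A} s_{ij}$ with $A \subseteq V$ is the cross-similarity. Hence the inner $\min$ becomes $\mathbf{1}[i \in Q]\cdot\max_{j \in A} s_{ij}$, and the $i \in V'$ contribution collapses to $\sum_{i \in Q} \max_{j \in A} s_{ij}$. Adding the two contributions yields the natural (unweighted) FL2MI expression $\sum_{i \in Q}\max_{j \in A}s_{ij} + \sum_{i \in A}\max_{j \in Q}s_{ij}$.

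Finally, I would close by introducing the parameter $\eta$ exactly as was done earlier in the paper for the analogous GC/FL/LogDet variants: multiplying the cross-similarities that appear in one of the two summands by $\eta$ preserves submodularity (it is equivalent to scaling the $V \to V'$ block of $S$) and gives the stated form $I_f(A;Q) = \sum_{i \in Q}\max_{j \in A}s_{ij} + \eta\sum_{i \in A}\max_{j \in Q}s_{ij}$. The only real subtlety in the whole argument is keeping track of which block of the kernel is being used for each of the two inner maxima as $i$ ranges over $V$ versus $V'$; the identity-block assumption on the within-$V$ and within-$V'$ pairs is precisely what makes the $\min$ collapse into a clean indicator, so I would state that assumption very explicitly before doing the case split.
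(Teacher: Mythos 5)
Your proof is correct and follows essentially the same route as the paper's: both exploit the identity-block structure of $S$ to collapse the within-$V$ and within-$V'$ maxima into indicators, observe that the terms with $i \notin A \cup Q$ vanish, and introduce $\eta$ by hand as a reweighting of one cross-similarity term. Your organization of the case split (by $i \in V$ versus $i \in V'$ rather than by membership in $A$, $Q$, and their complement) is marginally cleaner and makes the use of the bound $s_{ij} \le 1$ on cross-similarities more explicit, but it is the same argument.
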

\begin{proof}
Assuming $s_{ii} = 1$ is the maximum similarity score in the kernel, for the alternative formulation under the assumption that $U = \Omega$, we can break down the sum over elements in ground set $\Omega$ as follows. For any $i \in A, \max_{j \in A} s_{ij} = 1$ and hence the minimum (over sets $A$ and $Q$) will just be the term corresponding to Q (and a similar argument follows for terms in $Q$). Then we have,

\begin{align}
   I_f(A;B) &= \sum_{i \in \Omega\setminus (A \cup Q)} \min(\max_{j \in A} s_{ij}, \max_{j \in Q} s_{ij}) 
   + \sum_{i \in A \setminus Q} \max_{j \in Q} s_{ij} 
   + \sum_{i \in Q \setminus A} \max_{j \in A} s_{ij}
   + \sum_{i \in A \cap Q} 1  \\
   &= \sum_{i \in V \setminus A} \min(\max_{j \in A} s_{ij}, \max_{j \in Q} s_{ij}) + \sum_{i \in V^{\prime} \setminus Q} \min(\max_{j \in A} s_{ij}, \max_{j \in Q} s_{ij}) + \sum_{i \in A} \max_{j \in Q} s_{ij}
   + \sum_{i \in Q} \max_{j \in A} s_{ij}
\end{align}
This follows because $A \cap Q = \emptyset$. Finally, note that $$\sum_{i \in V \setminus A} \min(\max_{j \in A} s_{ij}, \max_{j \in Q} s_{ij}) + \sum_{i \in V^{\prime} \setminus Q} \min(\max_{j \in A} s_{ij}, \max_{j \in Q} s_{ij}) = 0$$ since $\forall i \in V \setminus A, j \in A, s_{ij} = 0$ and similarly $\forall i \in V^{\prime} \setminus Q, j \in Q, s_{ij} = 0$. This leaves us with $I_f(A; Q) = \sum_{i \in Q} \max_{j \in A} s_{ij} + \eta \sum_{i \in A} \max_{j \in Q} s_{ij}$
\end{proof}

\textbf{ROUGE:} We show that query-specific ROUGE~\cite{lin2004rouge}, a common evaluation metric in document and image summarization~\cite{lin2011class,tschiatschek2014learning}, is an example of GSMI, extending the work in~\cite{lin2011class} showing that ROUGE-N is submodular. Given a summary set $A$ and a reference (or a query) set $Q$, denote $c_i$ as the count of (visual) word $i$. Then $c_i(A)$ and $c_i(Q)$ denote the number of times word $i$ is present in sets $A$ and $Q$. Also, denote by $C$ the set of all words. We can define ROUGE\footnote{Typically, in literature the sum is over the words in the query set, but it is same as this expression because of the $\min$} as:
    $$\mbox{ROUGE}_Q(A) = \sum_{i \in C} \min(c_i(A), c_i(Q))$$
We also show that $\mbox{ROUGE}_Q(A)$ is a form of a GSMI $I_f(A; Q)$ for a restricted submodular function $f$. In particular, define $f(S) = \sum_{i \in C} \max(c_i(S \cap V), c_i(S \cap V^{\prime}))$ for $S \subseteq \Omega$. Note that $f$ is no longer submodular but it is restricted submodular on $\mathcal C(V, V^{\prime})$. Plugging the expression back into $I_f$, we see that $$I_f(A; Q) = \mbox{ROUGE}_Q(A)$$ (a detailed proof is provided in the Appendix).

\textbf{Concave Over Modular: } We end this section by studying one more important class, sums of concave over modular functions: $$F_{\eta}(A; Q) = \eta \sum_{i \in A} \psi(\sum_{j \in Q}s_{ij}) + \sum_{j \in Q} \psi(\sum_{i \in A} s_{ij})$$ Here $\psi$ is a concave function and again $\eta$ models the trade-off. This class is very general~\cite{bilmes2017deep} and does very well in query-focused extractive document summarization~\cite{lin2011class,lin2012submodularity} with $\delta_1 = 0$ and the concave function as square root. Define $f_{\delta_1, \delta_2}(S) = \delta_1 \sum_{i \in V^{\prime}} \max(\psi(\sum_{j \in S \cap V} s_{ij}), \psi(\sqrt{n}\sum_{j \in S \cap V^{\prime}} s_{ij})) + \delta_2 \sum_{i \in V} \max(\psi(\sum_{j \in S \cap V^{\prime}} s_{ij}), \psi(\sqrt{n}\sum_{j \in S \cap V} s_{ij}))$. Again, $f_{\delta_1, \delta_2}(S)$ is a restricted submodular function on $\mathcal C(V, V^{\prime})$. We can then show (detailed proof in the Appendix), that GSMI with $f_{\delta_1, \delta_2}(S)$ exactly achieves $F_{\delta_1, \delta_2}(A; Q)$.




\subsection{The Representational Power}

\subsubsection{Synthetic Dataset}
We study the behavior of these different instantiations and the effect of the different control parameters by applying them to the summarization tasks on a 2-D synthetic dataset where each element corresponds either to a data point (say, an image) or a query. We characterize the data with four clusters and a few outliers. Specifically, we do a greedy maximization of the above functions individually to arrive at an optimal subset (summary) and study the characteristics of summaries produced by different functions by visualizing them. 2-D visualizations of the points give interesting insights about the functions and different parameters involved.

\textbf{Generic Summarization}: Here we use the standard general forms of the GraphCut, FacilityLocation, DisparitySum, DisparityMin and LogDeterminant individually to produce summaries and verify the expected behavior as seen on synthetic data. While DisparitySum, DisparityMin and LogDeterminant prefer diversity over everything else (hence even picking up outliers)(Figure~\ref{fig:generic-1}), Facility Location prefers to choose representative elements first. GraphCut models both representativeness and diversity with $\lambda$ governing the tradeoff (Figure~\ref{fig:generic-2}).

\begin{figure*}[h]
    \includegraphics[width=\textwidth]{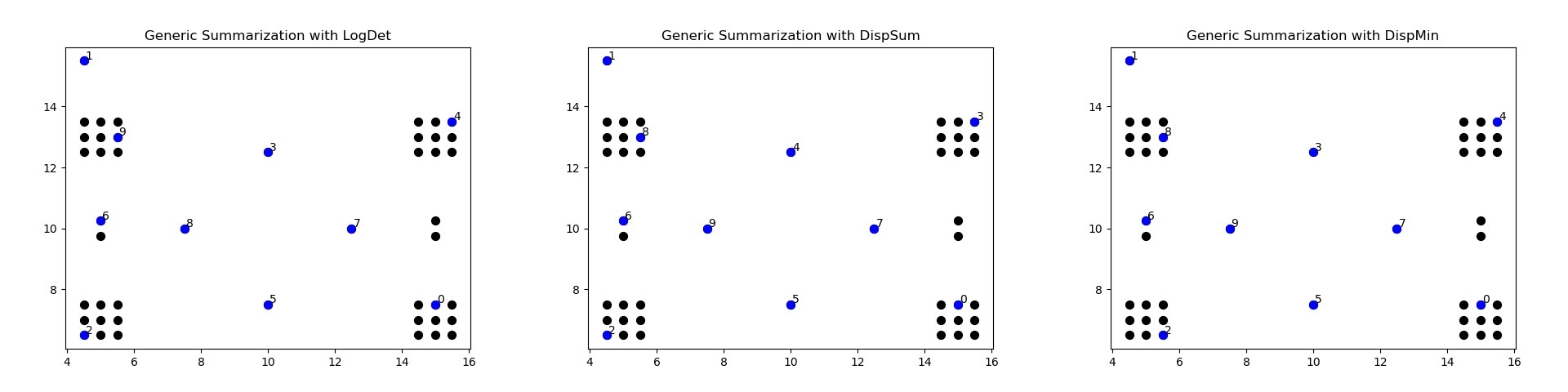}
    \caption{DisparitySum, DisparityMin and LogDeterminant prefer diversity over everything else}
    \label{fig:generic-1}
\end{figure*}

\begin{figure*}[h]
    \includegraphics[width=\textwidth]{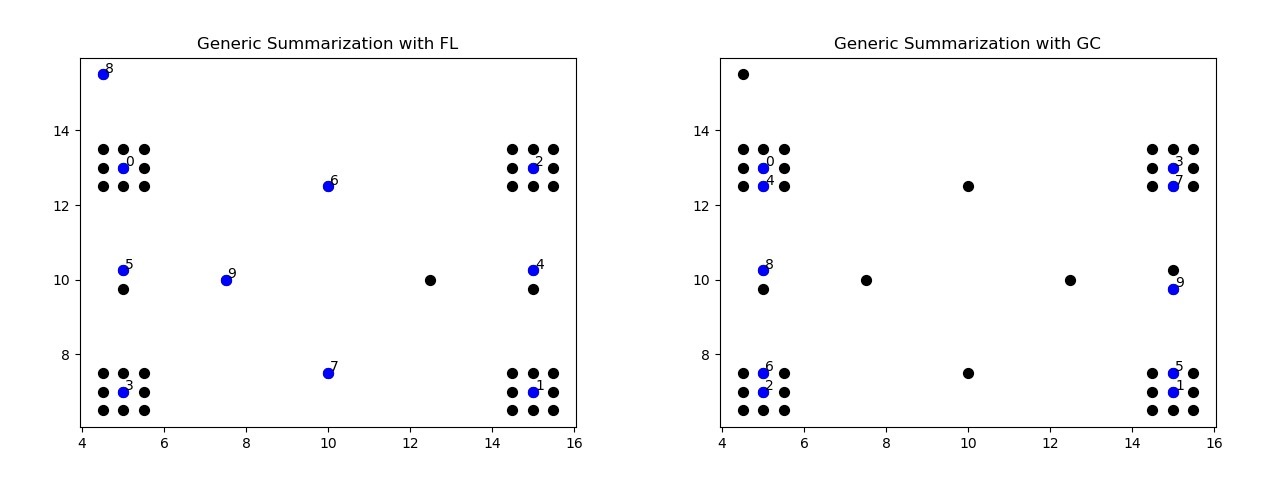}
    \caption{Facility Location prefers to choose representative elements first and then picks the outlier eventually. GraphCut (here with $\lambda$ = 2) doesn't pickup the outlier}
    \label{fig:generic-2}
\end{figure*}

\textbf{Query-focused Summarization}: Since our framework is rich enough to support multiple queries, we carried out extensive analysis to investigate the behavior of the proposed functions in producing summaries for different types of queries, like single queries, multiple queries, outlier queries etc. We report only key representative results here and encourage the readers to refer to similar illustrations and analysis for all other cases in the Appendix. We define \emph{query saturation} as a phenomenon where the function doesn't see any gains in picking more query-relevant items after having picked a few. Similarly, we call a function to be \emph{fair} to the queries if it doesn't starve a query by always picking elements closer to the other query. 

\begin{figure*}[h]
    \includegraphics[width=\textwidth]{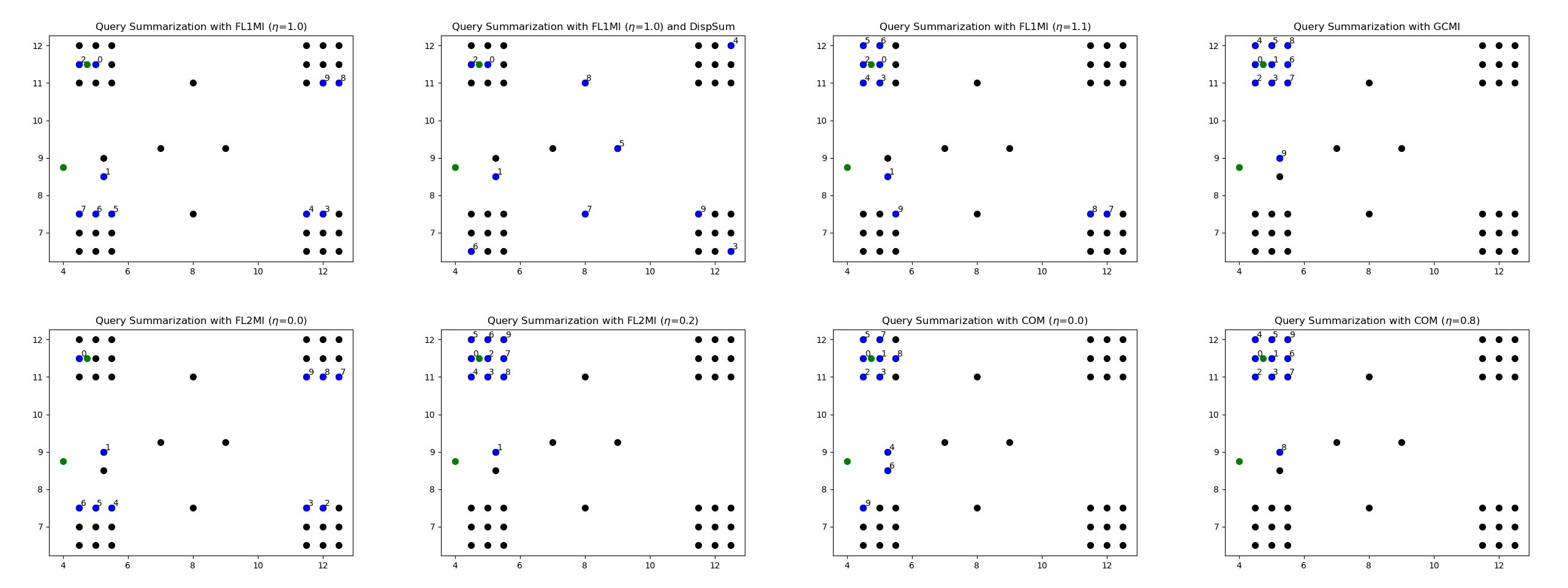}
    \caption{Behavior of different SMI functions and role of the control parameter $\eta$ in producing query focused summaries. Data points are in black, query points are in green and selected summary points are in blue, annotated with the order of their selection. Budget is 10. See text for further details. 
    }
    \label{fig:query-final}
\end{figure*}

\textbf{Facility Location:} As reported in the first three images of the top row in Figure~\ref{fig:query-final}, summary produced by maximizing FL1MI is fair to both queries and reaches saturation after picking up a few query matching points and then picks up points arbitrarily. Though this arbitrary picking of points is not desirable, saturation offers an advantage in the sense that by combining FLMI with a diversity function, say DisparitySum, with even a very small weight, the points picked up after saturation are diverse. We also observe the effect of $\eta$ in governing the query-relevance and saturation trade-off here. As $\eta$ is increased summary produced by FL1MI increasingly becomes more query-relevant. This is in contrast with the behavior of the alternative formulation FL2MI which in the similar way starts off being fair (with saturation) (first two images of the bottom row in Figure ~\ref{fig:query-final}) but even a slight positive non-zero $\eta$ immediately makes the summary highly query relevant and unfair. Also, FL2MI exhibits saturation by selecting just one matching point to each query as against FL1MI which selects more query-relevant points before reaching saturation. This effect of FL2MI is due to the max in the query-sum term of FL2MI as defined above. 

\textbf{Concave Over Modular:} The behavior of COM is reported in the last two images of the bottom row in Figure~\ref{fig:query-final}. As is evident, unlike the FL variants, COM doesn't exhibit saturation and rather prefers query-relevance. It is also fair to both queries. However, the fairness reduces as the $\eta$ is increased. 

\textbf{Graph Cut:} Summary produced by maximizing GCMI (last image of the top row in Figure~\ref{fig:query-final}) is purely query-relevant, without any saturation and is also, unfair to the outlier query. 

\noindent We defer the comparison and behaviour of log-determinant to the Appendix.

\begin{figure}[h]
\begin{center}
    \includegraphics[width=\textwidth]{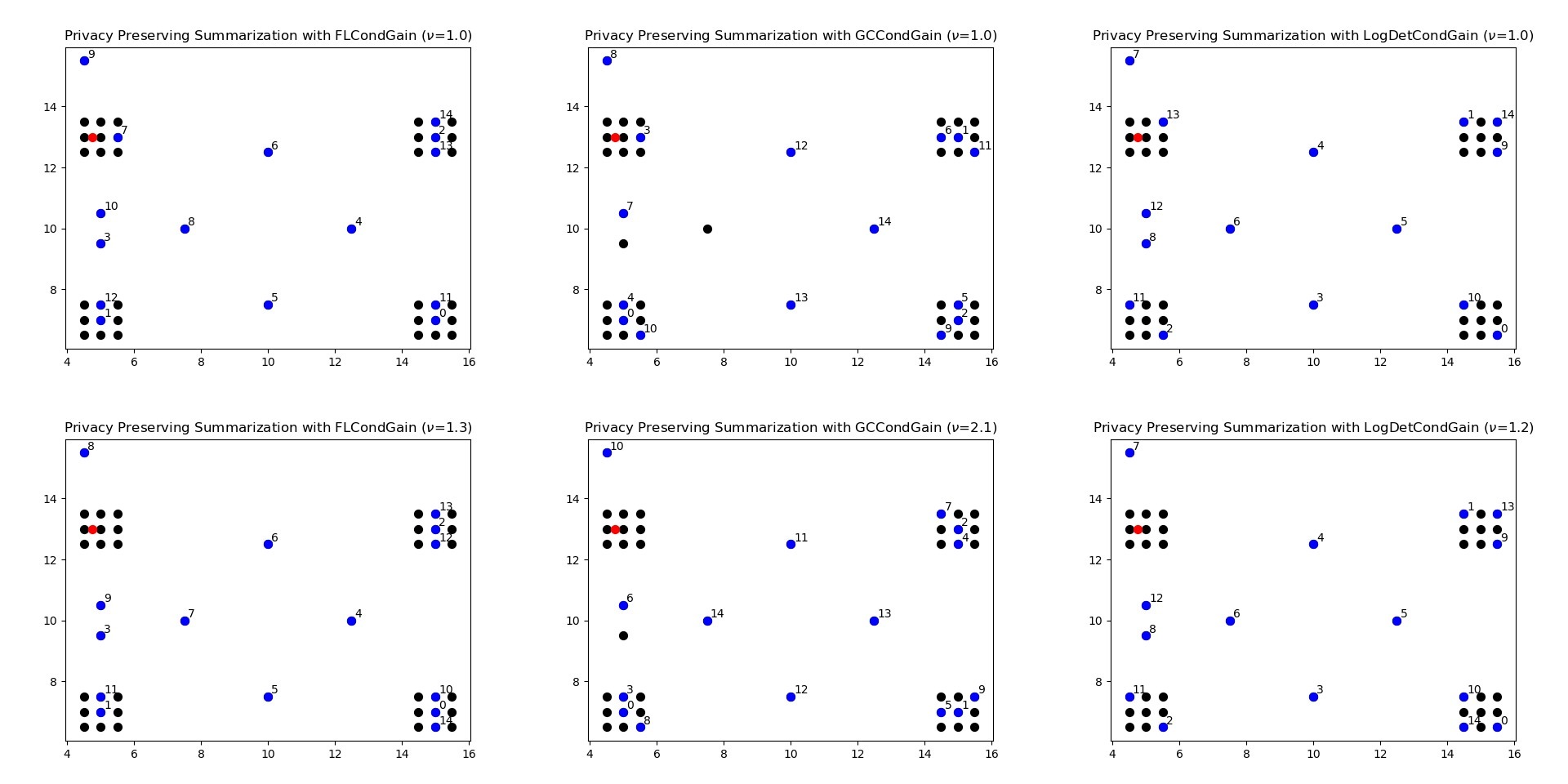}
    \caption{Behavior of different CG functions and role of the control parameter $\nu$ in producing privacy-preserving summaries. See text for more details.
    }
    \label{fig:privacy-all}
\end{center}
\end{figure}

\textbf{Privacy Preserving / Query Irrelevance Summarization}: We analyze the summaries produced by GCCondGain, FLCondGain and LogDetCondGain corresponding to GraphCut, FacilityLocation and LogDeterminant respectively and verify that they tend to produce summaries which avoids the query but end up picking it eventually depending on the budget as can be seen in the top row of the figure ~\ref{fig:privacy-all}. As noted earlier, this is expected as there is nothing in their original formulation which explicitly avoids picking the query (so as to become privacy preserving). However, as again already discussed, if such a behavior is desired (privacy preserving, as against query irrelevance), then it can be achieved via the parameter $\nu$ we introduced in their respective formulations. We verify this behavior and observe that increasing $\nu$ produces summary which explicitly avoids the privacy (query) point (bottom row in figure ~\ref{fig:privacy-all}).

\textbf{Unified query-focused and Privacy Preserving Summarization}: We maximize FLCondMI and LogDetCondMI which are the instantiations of CondGain by FacilityLocation and LogDeterminant respectively and visualize the summaries produced to verify that the summary produced has characteristics from both query-focused summarization as well as privacy preserving (or query irrelevance, as desired) summarization. That is, the optimal summary produced is relevant to the query while avoiding the point in the private set, subject to $\nu$. 

\subsubsection{Real Image Data}

Similar to our experiments on synthetic data, we produced image summaries by optimizing different functions individually with varying parameters to understand the behavior of different functions and the role of parameters as seen on real image collection. As expected, we made similar observations as in the case of synthetic data. For example, in the top row of Figure~\ref{fig:real-fl2} we see how the summary produced by FL2MI with $\eta=1$ for query "aircraft, sky" is saturated by picking just one query-relevant item after which it randomly picks items to fill the budget (5). On the other hand, with even a small value of $\eta=0.2$ the behavior changes to become completely query-relevant, picking all 5 elements relevant to the query (Figure~\ref{fig:real-fl2}). On the other hand, as another example, when the summary is produced by maximizing graph-cut based submodular mutual information function it picks all query-relevant items (Figure ~\ref{fig:real-gc})

\begin{figure*}[h]
    \includegraphics[width=\textwidth]{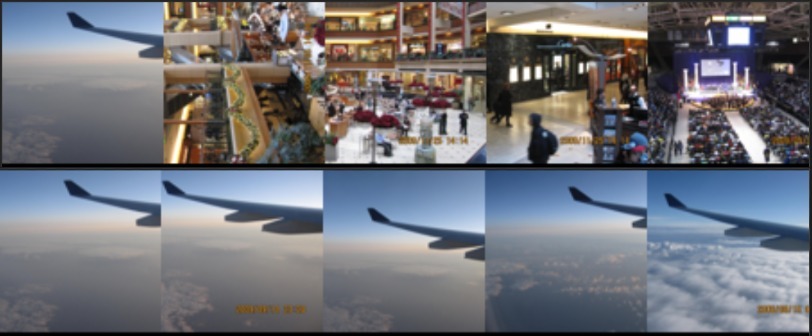}
    \caption{Behavior of FL2MI without $\eta$ (Top) and with $eta$ (Bottom). Summary of budget 5 for image collection number 3 and query "aircraft, sky"}
    \label{fig:real-fl2}
\end{figure*}

\begin{figure*}[h]
    \includegraphics[width=\textwidth]{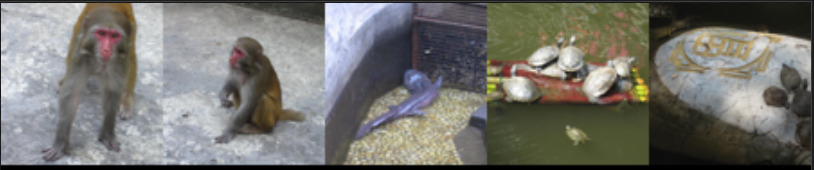}
    \caption{GC (here with $\lambda=1$) produces completely query-relevant summary}
    \label{fig:real-gc}
\end{figure*}



\section{Learning Submodular Information Measures}
The different functions presented above produce summaries with different characteristics. Hence, a mixture model would do well to learn internal parameters of these submodular information functions using the human reference summaries to arrive at good summaries. We consider the weights for mixtures of these models, and the trade-off parameters (like $\lambda$, $\eta$, $\nu$) of various functions as our model parameters. We build on prior work that learns mixtures of submodular functions in applications such as document summarization~\cite{lin2012learning}, video summarization~\cite{Gygli2015VideoSB,Kaushal2019DemystifyingMV,Kaushal2019AFT} and image summarization~\cite{tschiatschek2014learning}. We denote our parameter vector as $\Theta = (w, \eta, \lambda, \nu)$, and our mixture model as $F(\Theta)$. Then, given $N$ training examples $(V^{(n)}, Y^{(n)})$ we learn the parameters by optimizing the following large-margin formulation: $\min\limits_{\Theta \geq 0} \frac{1}{N} \sum_{n=1}^{N} L_n(\Theta) + \frac{\lambda}{2}||\Theta||^2$, where $L_n(\Theta)$ is the generalized hinge loss of training example $n$: $L_n(\Theta) = \max\limits_{Y \subset V^{(n)}, |Y| \leq k} (F(Y, x^{(n)}, \Theta) + l_n(Y)) - F(Y^{(n)}, x^{(n)}, \Theta))$. Here $n=1 \dots N$, $Y^{(n)}$ is a human summary for the $n^{th}$ ground set (image collection) $V^{(n)}$ with features $x^{(n)}$.This objective is chosen so that each human reference summary scores higher than any other summary by some margin $l_n(y)$. The parameters $\Theta$ are then learnt using gradient descent. For generic summarization, we add the standard submodular functions modeling representation, diversity, coverage etc. while for query-focused summarization, we use the SMI versions of the functions as defined above along with diversity and representation terms. For privacy-preserving / query-irrelevance summarization, we instantiate the CG versions of different functions along with diversity and representation. We also study the effectiveness of joint learning of query-focused and privacy-preserving summarization using CSMI functions.  Once the parameters are learnt, we instantiate the model with the learnt parameters to get automatically generated summaries. Below we present the specific forms of the mixture model and the objective function in the different cases of Generic, Query-Focused, Privacy-Preserving and Joint Summarization and defer the details of gradient computations to the Appendix. 

\subsection{Generic Summarization}
We denote our dataset of $N$ training examples as $(Y^{(n)}, V^{(n)}, x^{(n)})$ where $n=1 \dots N$, $Y^{(n)}$ is a human summary for the $n^{th}$ ground set (image collection) $V^{(n)}$ with features $x^{(n)}$.

We denote our mixture model in case of generic summarization as $$F(Y, x^{(n)}, w, \lambda) = \sum_{i=1}^M w_i f_i(Y, x^{(n)}, \lambda_i) $$ where $f_1 \dots f_M$ are the instantiations of different submodular functions, $w_i$ their weights and $\lambda_1 \dots \lambda_M$ are their internal parameters respectively, for example the $\lambda$ in case of Graph Cut function defined above.

So the parameters vector in case of generic summarization becomes $\Theta = (w_1 \dots w_m, \lambda_1 \dots \lambda_M)$

Then, $$L_n(\Theta) = \max_{Y \subset V^{(n)}, |Y| \leq k} [\sum_{i=1}^M w_i f_i(Y, x^{(n)}, \lambda_i) + l_n(Y)] - \sum_{i=1}^M w_i f_i(Y^{(n)}, x^{(n)}, \lambda_i) $$

\subsection{Query-Focused Summarization}

We denote our dataset of $N$ training examples as $(Y^{(n)}, V^{(n)}, x^{(n)}, Q^{(n)})$ where $n=1 \dots N$, $Y^{(n)}$ is a human query-summary for the query $Q^{(n)}$ on the $n^{th}$ ground set (image collection) $V^{(n)}$ with features $x^{(n)}$.

We denote our mixture model in case of query summarization as $$F(Y, Q^{(n)}, x^{(n)}, w, \lambda, \eta) = \sum_{i=1}^M w_i I_{f_i}(Y, Q^{(n)}, x^{(n)}, \lambda_i, \eta_i) $$ where $f_1 \dots f_M$ are the instantiations of different submodular mutual information functions, $w_i$ their weights, $\lambda_1 \dots \lambda_M$ are their internal parameters respectively and $\eta_1 \dots \eta_M$ are their query-relevance-diversity tradeoff parameters. 

So the parameters vector in case of query-focused summarization becomes $\Theta = (w_1 \dots w_m, \lambda_1 \dots \lambda_M, \eta_1 \dots \eta_M)$

Then, $$L_n(\Theta) = \max_{Y \subset V^{(n)}, |Y| \leq k} [\sum_{i=1}^M w_i I_{f_i}(Y, Q^{(n)}, x^{(n)}, \lambda_i, \eta_i) + l_n(Y)] - \sum_{i=1}^M w_i I_{f_i}(Y^{(n)}, Q^{(n)}, x^{(n)}, \lambda_i, \eta_i) $$

\subsection{Privacy Preserving Summarization}

We denote our dataset of $N$ training examples as $(Y^{(n)}, V^{(n)}, x^{(n)}, P^{(n)})$ where $n=1 \dots N$, $Y^{(n)}$ is a human privacy-summary for the privacy set $P^{(n)}$ on the $n^{th}$ ground set (image collection) $V^{(n)}$ with features $x^{(n)}$.

We denote our mixture model in case of privacy-preserving summarization as $$F(Y, P^{(n)}, x^{(n)}, w, \lambda, \nu) = \sum_{i=1}^M w_i f_i(Y, P^{(n)}, x^{(n)}, \lambda_i, \nu_i) $$ where $f_1 \dots f_M$ are the instantiations of different conditional gain functions, $w_i$ their weights, $\lambda_1 \dots \lambda_M$ are their internal parameters respectively and $\nu_1 \dots \nu_M$ are their privacy-sensitivity parameters. 

So the parameters vector in case of privacy-preserving summarization becomes $\Theta = (w_1 \dots w_m, \lambda_1 \dots \lambda_M, \nu_1 \dots \eta_M)$

Then, $$L_n(\Theta) = \max_{Y \subset V^{(n)}, |Y| \leq k} [\sum_{i=1}^M w_i f_i(Y, P^{(n)}, x^{(n)}, \lambda_i, \nu_i) + l_n(Y)] - \sum_{i=1}^M w_i f_i(Y^{(n)}, P^{(n)}, x^{(n)}, \lambda_i, \nu_i) $$

\subsection{Joint Summarization}

We denote our dataset of $N$ training examples as $(Y^{(n)}, V^{(n)}, x^{(n)}, Q^{(n)}, P^{(n)})$ where $n=1 \dots N$, $Y^{(n)}$ is a human query-privacy-summary for the query set $Q^{(n)}$ and privacy set $P^{(n)}$ on the $n^{th}$ ground set (image collection) $V^{(n)}$ with features $x^{(n)}$.

We denote our mixture model in case of joint query-focused and privacy preserving summarization as $$F(Y, Q^{(n)}, P^{(n)}, x^{(n)}, w, \lambda, \eta, \nu) = \sum_{i=1}^M w_i f_i(Y, Q^{(n)}, P^{(n)}, x^{(n)}, \lambda_i, \eta_i, \nu_i) $$ where $f_1 \dots f_M$ are the instantiations of different conditional submodular mutual information functions, $w_i$ their weights, $\lambda_1 \dots \lambda_M$ are their internal parameters respectively, $\eta_i$ are their query-relevance vs diversity trade-off parameters and $\nu_1 \dots \nu_M$ are their privacy-sensitivity parameters. 

So the parameters vector in case of joint summarization becomes $\Theta = (w_1 \dots w_m, \lambda_1 \dots \lambda_M, \eta_1 \dots \eta_M, \nu_1 \dots \eta_M)$

Then, $$L_n(\Theta) = \max_{Y \subset V^{(n)}, |Y| \leq k} [\sum_{i=1}^M w_i f_i(Y, Q^{(n)}, P^{(n)}, x^{(n)}, \lambda_i, \eta_i, \nu_i) + l_n(Y)] - \sum_{i=1}^M w_i f_i(Y^{(n)}, Q^{(n)}, P^{(n)}, x^{(n)}, \lambda_i, \eta_i, \nu_i) $$



\section{Experiments and Results}

\subsection{Dataset and Evaluation} We demonstrate the utility and effectiveness of the proposed framework by applying it on a sample task of image collection summarization. We use the image collection dataset of~\cite{tschiatschek2014learning}. The dataset has 14 image collections with 100 images each and provides many human summaries per collection. We extend it by creating dense noun concept annotations for every image thereby making it suitable for our task. We start by designing the universe of concepts based on the 600 object classes in OpenImagesv6~\cite{kuznetsova2018open} and 365 scenes in Places365~\cite{zhou2014learning}. We eliminate concepts common to both (for example, \emph{closet}) to get a unified list of 959 concepts. To ease the annotation process we adopt pseudo-labelling followed by human correction. Specifically for every image we get the concept labels from a Yolov3 model pre-trained on OpenImagesv6 (for object concepts) and a ResNet50 model pre-trained on Place365 (for scene concepts). We then ask 5 human annotators to separately and individually correct the automatically generated labels (pseudo-labels). We then find a consensus over the set of concepts for each image to arrive at the final annotation concept vectors for each image. We have developed a Python GUI tool to ease this pseudo-label correction process (see Figure ~\ref{fig:tool}).

\begin{figure*}[h!]
    \includegraphics[width=\textwidth]{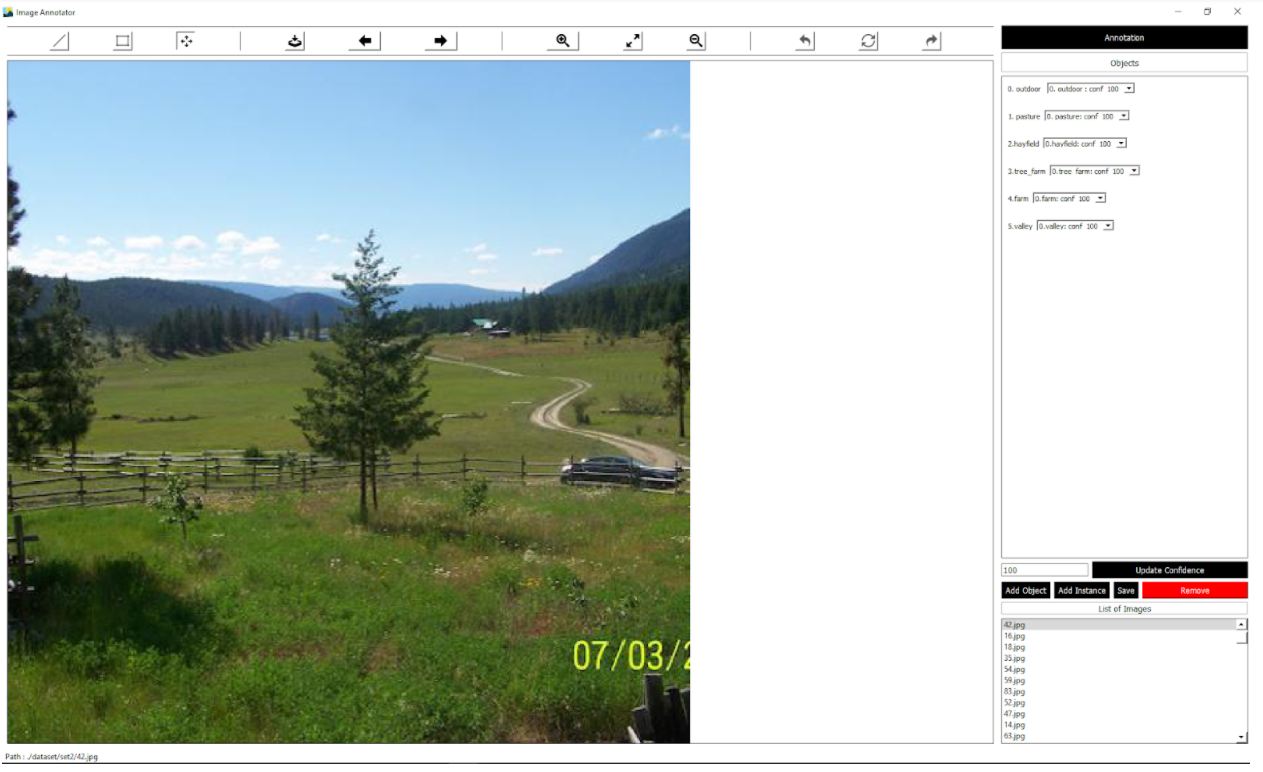}
    \caption{Screenshot of the Annotation Tool}
    \label{fig:tool}
\end{figure*}

In addition to the already available generic human summaries, we augment the dataset with query-focused, privacy-preserving and joint query-focused and privacy-preserving summaries for each image collection. Specifically, we design 2 uni-concept and 2 bi-concept queries / private sets for each image collection to cover different cases like a) both concepts belonging to same image b) both concepts belonging to different images c) only one concept in the image collection. This is similar in spirit to~\cite{sharghi2017query}. We ask a group of 10 human annotators (different from those who annotated for concepts) to create a human summary (of 5 images) for each image collection and query/private pair. To ensure gold standard summaries, we followed this by a verification round. Specifically, we asked at least three annotators to accept/reject the summaries thus produced and deleted those human summaries which were rejected by two or more such verifiers. 

\subsection{Implementation Details} To instantiate the mixture components during training, we represent images using the probabilistic feature vector taken from the output layer of YOLOv3 model~\cite{redmon2018yolov3} pre-trained on OpenImagesv6 and concatenate it with the probability vector of scenes from the output layer of ~\cite{zhou2014learning} trained on Places365 dataset. The queries which are sets of concepts are mapped to a similar feature space as $k$-hot vectors ($k$ being the number of concepts in a query) to facilitate image-query similarity. Thus both images and queries and/or elements in private set are represented using a $L$-dimensional vector where $L$ is the number of concepts in the universe of concepts. While more complex queries and methods for learning joint embedding between text and images could be employed, we chose simpler alternatives to stick to the main focus area of this work. We initialize the parameters using Xavier initialization. We train the mixture model for 20 epochs using 1 - V-ROUGE ~\cite{tschiatschek2014learning} as the margin loss and update parameters using Nesterov's accelerated gradient descent. All results reported here are average V-ROUGE scores across different runs in a leave-one-out setting.


\subsection{Results and Discussion}

\textbf{Comparison with prior art:} Since there is no explicit past work in query/private image collection summarization, we use the individual components to serve as baselines in two ways - a) to establish the superiority of mixture model over a single component b) as explained above, some individual components can be seen as very similar to some past work and hence serve to offer a rough comparison with those past work. 

We conduct experiments to validate the effectiveness of our framework in case of image-collection summarization as a sample application. We report the results for generic summarization in Table~\ref{tab:generic} and Figure~\ref{fig:generic}. Results for query-focused summarization are reported in Table~\ref{tab:query} and Figure~\ref{fig:query}. Likewise, Tables~\ref{tab:private},~\ref{tab:joint} and Figures~\ref{fig:private}, ~\ref{fig:joint} report the results of privacy-preserving and joint (query-focused and privacy-preserving) summarization respectively.
In particular, we report the average V-ROUGE scores of the learned mixture (red bars) in each of the four summarization settings compared against score of human summaries (green bars), random summaries (orange bars) and baseline scores of individual mixture components (blue bars). We see that our mixture model always outperforms random and all baselines and is second only to human performance. As discussed at length above, different functions model different characteristics and hence fare better than any individual component which can model only one characteristic individually. Due to the highly subjective nature of image collection summarization, human summaries tend to be diverse, with a mix of characteristics and hence a mixture with different components learns the underlying characteristics well. Also, the different control parameters like $\lambda$, $\eta$ and $\nu$ which are critical in producing good summaries under different settings are learnable parameters in our framework along with the weights of the mixture components $\Theta = (w, \eta, \lambda, \nu)$ hence making the model perform better.

\begin{table}[]
    \centering
    \begin{tabular}{|c|c|c|c|c|}
    \hline
    \textbf{Method}& \textbf{Avg} & \textbf{Min} & \textbf{Max} & \textbf{Std Deviation} \\
    \hline
    Mixture & \textbf{0.477} & 0.315 & 0.625 & 0.088 \\
    GC & 0.47 & 0.324 & 0.575 & 0.077 \\
    FL & 0.454 & 0.277 & 0.575 & 0.087 \\
    LogDet & 0.453 & 0.378 & 0.675 & 0.095 \\
    SC & 0.386 & 0.329 & 0.578 & 0.062 \\
    Human & 0.47 & 0.405 & 0.529 & 0.041 \\
    Random & 0.426 & 0.384 & 0.485 & 0.033 \\
    \hline
    \end{tabular}
    \caption{Performance of our mixture model instantiated with different submodular functions for generic image-collection summarization}
    \label{tab:generic}
\end{table}

\begin{figure*}
    \includegraphics[width=\textwidth]{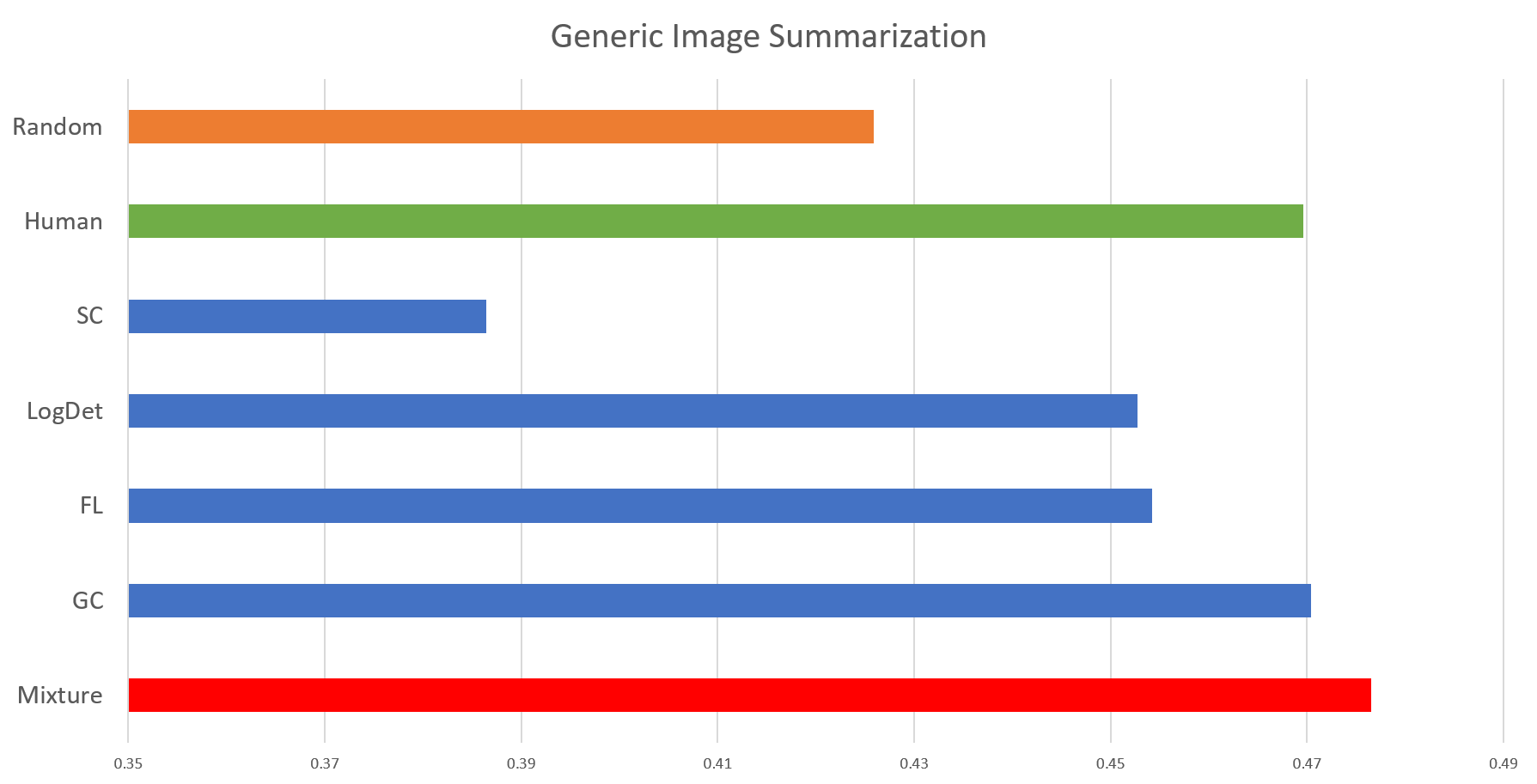}
    \caption{Generic image-collection summarization results}
    \label{fig:generic}
\end{figure*}

\begin{table}[]
    \centering
    \begin{tabular}{|c|c|c|c|c|}
    \hline
    \textbf{Method}& \textbf{Avg} & \textbf{Min} & \textbf{Max} & \textbf{Std Deviation} \\
    \hline
    Mixture & \textbf{0.401} & 0.151 & 0.464 & 0.08 \\
    GCMI & 0.318 & 0.204 & 0.616 & 0.115 \\
    FL1MI & 0.318 & 0.168 & 0.441 & 0.085 \\
    FL2MI & 0.318 & 0.204 & 0.616 & 0.115 \\
    LogDetMI & 0.234 & 0.189 & 0.48 & 0.091 \\
    SCMI & 0.273 & 0.136 & 0.438 & 0.073 \\
    PSCMI & 0.262 & 0.112 & 0.482 & 0.09 \\
    COM & 0.318 & 0.204 & 0.616 & 0.115 \\
    Human & 0.543 & 0.114 & 0.468 & 0.095 \\
    Random & 0.292 & 0.432 & 0.726 & 0.077 \\
    \hline
    \end{tabular}
    \caption{Performance of our mixture model instantiated with different submodular mutual information functions for query image-collection summarization}
    \label{tab:query}
\end{table}

\begin{figure*}
    \includegraphics[width=\textwidth]{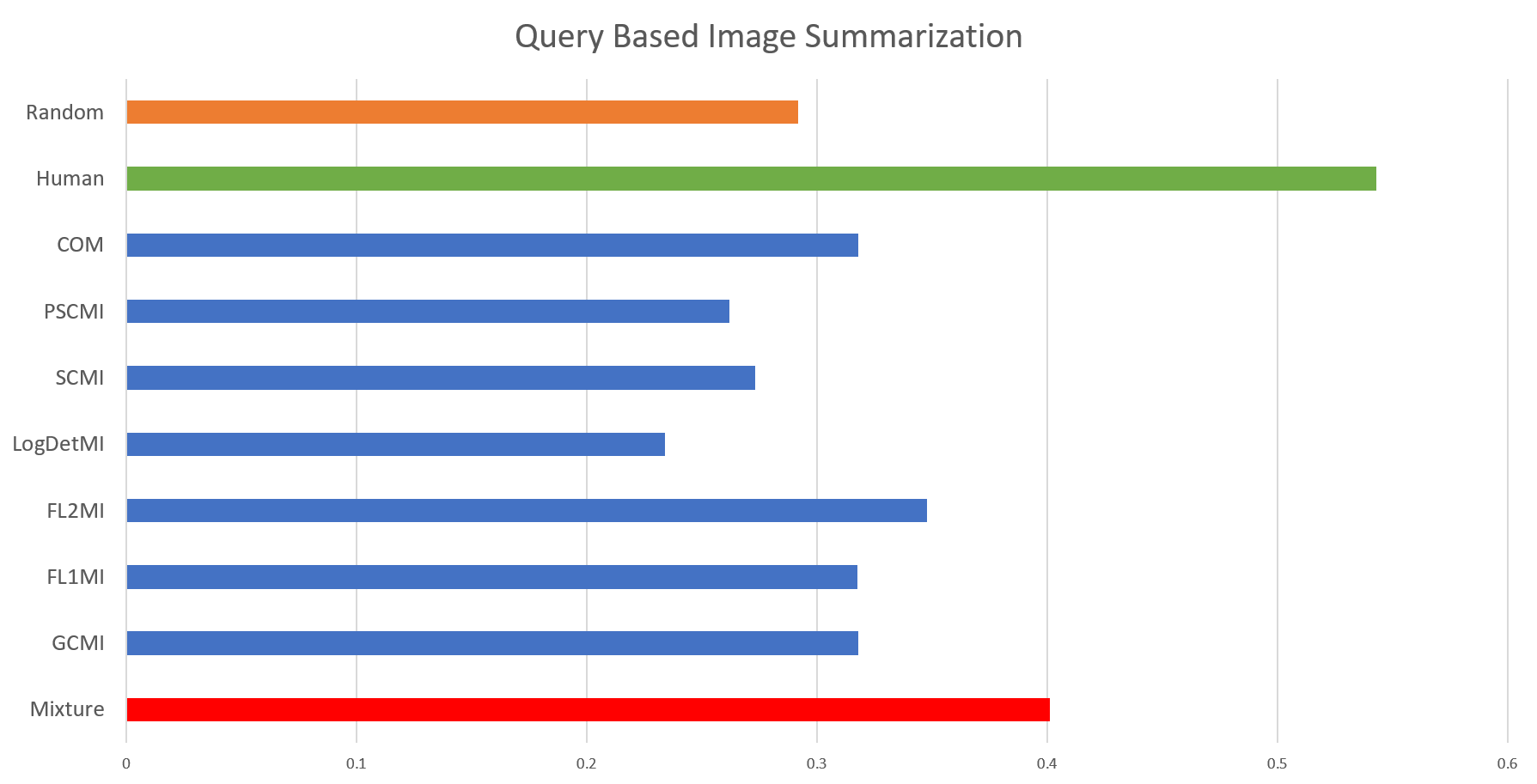}
    \caption{Query-focused image-collection summarization results}
    \label{fig:query}
\end{figure*}

\begin{table}[]
    \centering
    \begin{tabular}{|c|c|c|c|c|}
    \hline
    \textbf{Method}& \textbf{Avg} & \textbf{Min} & \textbf{Max} & \textbf{Std Deviation} \\
    \hline
    Mixture & \textbf{0.391} & 0.168 & 0.443 & 0.088 \\
    GCCG & 0.312 & 0.219 & 0.355 & 0.039 \\
    FLCG & 0.274 & 0.168 & 0.42 & 0.072 \\
    SCCG & 0.359 & 0.203 & 0.457 & 0.074 \\
    PSCCG & 0.41 & 0.231 & 0.617 & 0.1 \\
    Human & 0.481 & 0.42 & 0.578 & 0.051 \\
    Random & 0.306 & 0.231 & 0.374 & 0.039 \\
    \hline
    \end{tabular}
    \caption{Performance of our mixture model instantiated with different conditional gain functions for privacy-preserving image-collection summarization}
    \label{tab:private}
\end{table}

\begin{figure*}
    \includegraphics[width=\textwidth]{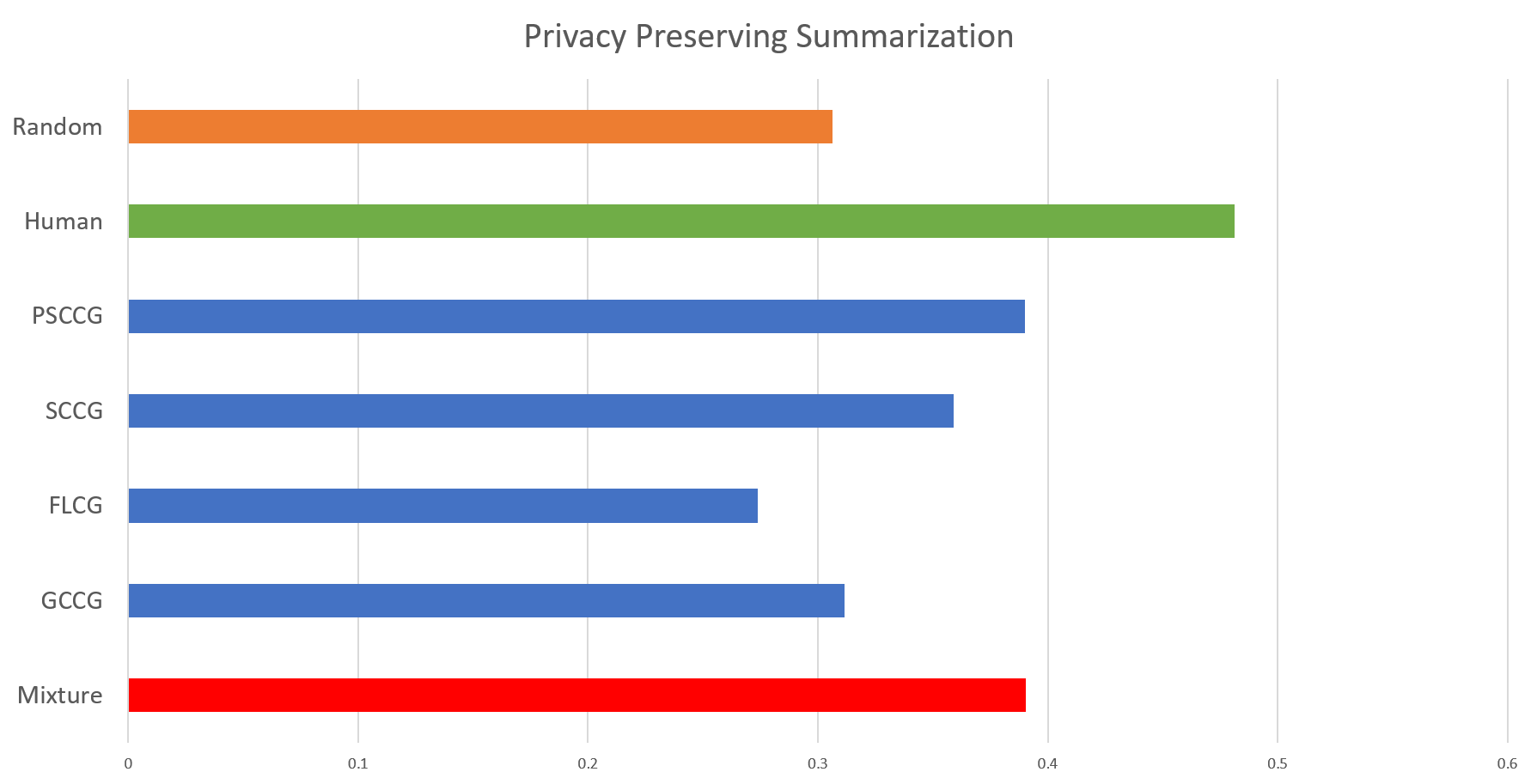}
    \caption{Privacy-preserving image-collection summarization results}
    \label{fig:private}
\end{figure*}

\begin{table}[]
    \centering
    \begin{tabular}{|c|c|c|c|c|}
    \hline
    \textbf{Method}& \textbf{Avg} & \textbf{Min} & \textbf{Max} & \textbf{Std Deviation} \\
    \hline
    Mixture & \textbf{0.314} & 0.103 & 0.505 & 0.104 \\
    SCCondMI & 0.287 & 0.199 & 0.449 & 0.068 \\
    PSCCondMI & 0.29 & 0.201 & 0.383 & 0.057 \\
    FLCondMI & 0.302 & 0.169 & 0.581 & 0.118 \\
    LogDetCondMI & 0.273 & 0.082 & 0.409 & 0.099 \\
    Human & 0.628 & 0.457 & 1 & 0.154 \\
    Random & 0.304 & 0.222 & 0.391 & 0.053 \\
    \hline
    \end{tabular}
    \caption{Performance of our mixture model instantiated with different conditional submodular mutual information functions for joint image-collection summarization}
    \label{tab:joint}
\end{table}

\begin{figure*}
    \includegraphics[width=\textwidth]{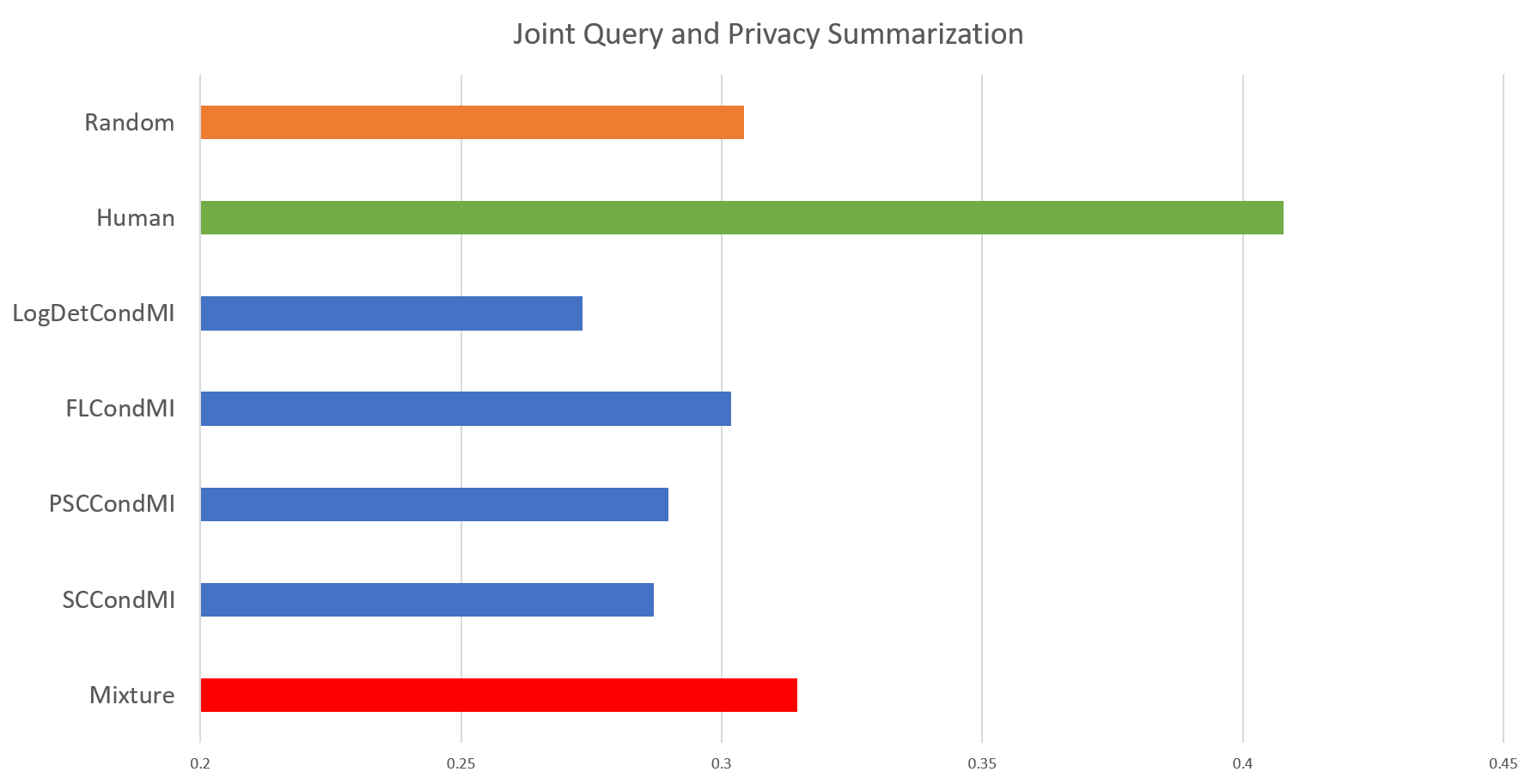}
    \caption{Joint image-collection summarization results}
    \label{fig:joint}
\end{figure*}

\subsection{Ablation Experiments to Study the Effect of $\eta$ and $\nu$}

We performed ablation experiments to quantitatively study the effect of introducing the learnable parameters $\eta$ and $\nu$ on improving the performance of the mixture model. We report the comparison in Table~\ref{tab:ablation}. As expected, the model learnt with these parameters performs better than the one learnt without them. 

\begin{table}[h!]
\centering
\begin{tabular}{|l|l|r|r|r|r|}
\hline
\textbf{Mode}                     & \textbf{Setting}               & \multicolumn{1}{l|}{\textbf{Avg}} & \multicolumn{1}{l|}{\textbf{Min}} & \multicolumn{1}{l|}{\textbf{Max}} & \multicolumn{1}{l|}{\textbf{Std Deviation}} \\
\hline
\multirow{2}{*}{Query}   & With $\eta$ and $\nu$ & \textbf{0.401}                   & 0.151                   & 0.464                   & 0.08                              \\
                         & W/o $\eta$ and $\nu$  & 0.341                   & 0.114                   & 0.491                   & 0.093                             \\ \hline
\multirow{2}{*}{Private} & With $\eta$ and $\nu$ & \textbf{0.391}                   & 0.168                   & 0.443                   & 0.088                             \\
                         & W/o $\eta$ and $\nu$  & 0.302                   & 0.099                   & 0.426                   & 0.08                              \\ \hline
\multirow{2}{*}{Joint}   & With $\eta$ and $\nu$ & \textbf{0.314}                   & 0.103                   & 0.505                   & 0.104                             \\
                         & W/o $\eta$ and $\nu$  & 0.249                   & 0.1                     & 0.524                   & 0.12    \\               \hline
 
\end{tabular}
\caption{Ablation study to compare the model learnt without the additional parameters $\eta$ and $\nu$ with the model learnt with them}
\label{tab:ablation}
\end{table}

\section{Conclusion}
We present a unified approach for generic, query-focused, privacy sensitive, and update summarization tasks
through a rich framework offered by submodular information measures that also helps generalize past work in this area. 
Specifically, we show that a number of previously used approaches for query focused and update summarization, can in fact be viewed as maximizing the SMI and CG functions. We also show that the widely used \emph{ROUGE} metric is an instance of GSMI (generalized SMI). We then carefully study and demonstrate the modelling and representation capabilities of the proposed functions in various settings and empirically verify our findings on both a synthetic data and a real-world, privacy-preserving and query-focused image collection dataset that we intend to release with this paper.  

\section*{Acknowledgements}

This work is supported in part by the Ekal Fellowship (www.ekal.org) and National Center of Excellence in Technology for Internal Security, IIT Bombay (NCETIS, https://rnd.iitb.ac.in/node/101506)

\bibliographystyle{abbrv}
\bibliography{main}

\appendix
\setcounter{lemma}{0}
\section{Some Past Work as Special Cases of Our Framework}

Several query-focused summarization works for document summarization~\cite{lin2012submodularity, li2012multi} and video summarization~\cite{vasudevan2017query} use GCSMI. \cite{vasudevan2017query} study a simple graph-cut based query relevance term (which is a special case of our submodular mutual information framework). Also~\cite{sharghi2016query,sharghi2017query} studies hierarchical DPPs to model both diversity and query relevance, and the query-DPP considered here is a special case of our framework. The query-focused summarization model used in~\cite{sharghi2016query} is very similar to our LogDetSMI Conditional gain for the graph cut function was used in update summarization~\cite{li2012multi}. In terms of document summarization, \cite{lin2011class} define a joint diversity and query relevance term (which we show can also be seen as an instance of our submodular mutual information), and which achieved state of the art results for query-focused document summarization. \cite{li2012multi} use graph-cut  functions for query-focused and update summarization (both of which are, again, instances of our submodular information measures).

\subsection{Query-specific ROUGE is an example of GSMI}
We also show that $\mbox{ROUGE}_Q(A)$ is a form of a GSMI $I_f(A; Q)$ for a restricted submodular function $f$.
In particular, define $f(S) = \sum_{i \in C} \max(c_i(S \cap V), c_i(S \cap V^{\prime}))$ for $S \subseteq \Omega$. Note that $f$ is no longer submodular but it is restricted submodular on $\mathcal C(V, V^{\prime})$. Plugging the expression back into $I_f$, we see that $I_f(A; Q) = \mbox{ROUGE}_Q(A)$.
\begin{lemma}
Given a restricted submodular function $f(S) = \sum_{i \in C} \max(c_i(S \cap V), c_i(S \cap V^{\prime}))$ for $S \subseteq \Omega$, $I_f(A; Q) = \mbox{ROUGE}_Q(A)$.
\end{lemma}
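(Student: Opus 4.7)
The plan is to proceed by direct computation, evaluating each of the three terms in the definition $I_f(A; Q) = f(A) + f(Q) - f(A \cup Q)$ and observing that they collapse, concept by concept, into the ROUGE expression via the identity $a + b - \max(a,b) = \min(a,b)$.

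First I would evaluate $f(A)$ when $A \subseteq V$. Since $A \cap V = A$ and $A \cap V^{\prime} = \emptyset$, and concept counts on the empty set vanish, the inner max reduces and we get $f(A) = \sum_{i \in C} \max(c_i(A), 0) = \sum_{i \in C} c_i(A)$. By the same token, for $Q \subseteq V^{\prime}$ we have $Q \cap V = \emptyset$ and $Q \cap V^{\prime} = Q$, so $f(Q) = \sum_{i \in C} c_i(Q)$.

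Next I would evaluate $f(A \cup Q)$. Because $A \subseteq V$ and $Q \subseteq V^{\prime}$ live in the two disjoint pieces of $\Omega = V \cup V^{\prime}$, we have $(A \cup Q) \cap V = A$ and $(A \cup Q) \cap V^{\prime} = Q$. Substituting into the definition of $f$ yields $f(A \cup Q) = \sum_{i \in C} \max(c_i(A), c_i(Q))$.

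Finally I would combine the three expressions. Using the elementary identity $a + b - \max(a,b) = \min(a,b)$ for each concept $i$, we obtain
\begin{equation*}
I_f(A; Q) = \sum_{i \in C} \bigl[c_i(A) + c_i(Q) - \max(c_i(A), c_i(Q))\bigr] = \sum_{i \in C} \min(c_i(A), c_i(Q)) = \mathrm{ROUGE}_Q(A),
\end{equation*}
which is exactly the claim. There is essentially no obstacle: the only subtle point is that $f$ is not fully submodular, so one must be careful to apply the identity $I_f(A; Q) = f(A) + f(Q) - f(A \cup Q)$ only for $(A, Q) \in \mathcal{C}(V, V^{\prime})$, which is precisely the regime of the generalized submodular mutual information established in Lemma~\ref{lemma:genSMI}.
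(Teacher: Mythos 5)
Your proposal is correct and follows essentially the same route as the paper: evaluate $f(A)$, $f(Q)$, and $f(A \cup Q)$ term by term and collapse via $a + b - \max(a,b) = \min(a,b)$. The only difference is that the paper's proof additionally verifies at the end that $f$ is restricted submodular on $\mathcal C(V, V^{\prime})$, whereas you take that as given from the hypothesis and simply flag the caveat --- a reasonable reading of the statement, so there is no gap.
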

\begin{proof}
We first expand out the expression for $I_f(A; Q)$. Note that $f(S) = \sum_{i \in C} \max(c_i(S \cap V), c_i(S \cap V^{\prime}))$ for $S \subseteq \Omega$ and hence $f(A) = \sum_{i \in C} c_i(A)$ and $f(Q) = \sum_{i \in C} c_i(Q)$. Hence, 
\begin{align}
    I_f(A; Q) &= f(A) + (Q) - f(A \cup Q) \\
    &= \sum_{i \in C} c_i(A) + \sum_{i \in C} c_i(Q) - \sum_{i \in C} \max(c_i(A), c_i(Q)) \\
    &= \sum_{i \in C} \min(c_i(A), c_i(Q)) \\
    &= \mbox{ROUGE}_Q(A)
\end{align}
Finally, note that for any set $A, B \subseteq V$ or $V^{\prime}$, it holds that $f(A) + f(B) \geq f(A \cup B) + f(A \cap B)$. Similarly, for any sets $A \subseteq V$ and $B \subseteq V^{\prime}$, $f(A) + f(B) \geq f(A \cup B)$ and hence the $f$ defined here is restricted submodular on $\mathcal C(V, V^{\prime})$. 
\end{proof}

\subsection{Concave Over Modular}
Define the following query based submodular function:
\begin{align}
  F_{\eta}(A; Q) = \eta \sum_{i \in A} \psi(\sum_{j \in Q}s_{ij}) + \sum_{j \in Q} \psi(\sum_{i \in A} s_{ij})
\end{align}
This is a very general class of functions ~\cite{bilmes2017deep} and does very well in query-focused extractive document summarization~\cite{lin2011class,lin2012submodularity} with $\eta = 0$ and the concave function as square root. Next, define the following restricted submodular function:
\begin{align}
    f_{\eta}(S) = \eta \sum_{i \in V^{\prime}} \max(\psi(\sum_{j \in S \cap V} s_{ij}), \psi(\sqrt{n}\sum_{j \in S \cap V^{\prime}} s_{ij})) +  \sum_{i \in V} \max(\psi(\sum_{j \in S \cap V^{\prime}} s_{ij}), \psi(\sqrt{n}\sum_{j \in S \cap V} s_{ij}))
\end{align}
The following result connects GSMI with $f_{\eta}$ with $F_{\eta}(A; Q)$.
\begin{lemma}
The function $f_{\eta}(S)$ is a restricted submodular function on $\mathcal C(V, V^{\prime})$. Furthermore the GSMI with $f_{\eta}$ is exactly $F_{\eta}(A; Q)$, given a kernel matrix which satisfies $s_{ij} = 1(i == j)$ for $i, j \in V$ or $i, j \in V^{\prime}$.
\end{lemma}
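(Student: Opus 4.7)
My plan is to split the lemma into its two assertions --- restricted submodularity of $f_\eta$ on $\mathcal C(V, V^{\prime})$ and the identity $I_{f_\eta}(A; Q) = F_\eta(A; Q)$ --- and handle them in that order. Throughout I will invoke the natural conventions that $\psi$ is concave, monotone non-decreasing with $\psi(0) = 0$, and that the scale factor $\sqrt{n}$ is large enough that $\psi(\sqrt{n})$ dominates any cross-similarity value $\psi(\sum_{j \in S} s_{ij})$ that could appear inside an inner max; this is the implicit normalization the $\sqrt{n}$ is there to enforce.

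For restricted submodularity, I would case-split on $(A, B) \in \mathcal C(V, V^{\prime})$ by whether both sets lie in $V$, both lie in $V^{\prime}$, or one lies in each. Consider $A, B \subseteq V$ (the pure-$V^{\prime}$ case being symmetric): using $s_{ij} = \mathbf{1}(i = j)$ within $V$ and $\psi(0) = 0$, the second summand of $f_\eta(S)$ collapses to $|S|\psi(\sqrt{n})$, which is modular, while the first summand reduces to $\eta \sum_{i \in V^{\prime}} \psi(\sum_{j \in S} s_{ij})$, which is a standard concave-over-modular function and hence submodular. In the cross case $A \subseteq V$, $B \subseteq V^{\prime}$, $A \cap B = \emptyset$ forces the needed inequality $f_\eta(A) + f_\eta(B) \geq f_\eta(A \cup B)$, which I would verify termwise using $x + y \geq \max(x, y)$ for non-negative $x, y$ applied inside each $\max$. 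The remaining asymmetric pairs in $\mathcal C(V, V^{\prime})$ reduce to these subcases by decomposing the unconstrained set into its $V$ and $V^{\prime}$ pieces.

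For the GSMI identity, I would compute $f_\eta(A)$, $f_\eta(Q)$, and $f_\eta(A \cup Q)$ separately and then subtract. The collapses above give $f_\eta(A) = \eta \sum_{i \in V^{\prime}} \psi(\sum_{j \in A} s_{ij}) + |A| \psi(\sqrt{n})$ and symmetrically $f_\eta(Q) = \eta |Q| \psi(\sqrt{n}) + \sum_{i \in V} \psi(\sum_{j \in Q} s_{ij})$. For $f_\eta(A \cup Q)$, the $\sqrt{n}$-dominance assumption lets the first inner max pick $\psi(\sqrt{n})$ exactly at the $|Q|$ coordinates $i \in Q$ and $\psi(\sum_{j \in A} s_{ij})$ on $V^{\prime} \setminus Q$; the second inner max behaves analogously with $A$ in place of $Q$. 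Substituting into $I_{f_\eta}(A;Q) = f_\eta(A) + f_\eta(Q) - f_\eta(A \cup Q)$, the $|A|\psi(\sqrt{n})$ and $|Q|\psi(\sqrt{n})$ pieces cancel and the remaining sums telescope to $\eta \sum_{i \in Q} \psi(\sum_{j \in A} s_{ij}) + \sum_{i \in A} \psi(\sum_{j \in Q} s_{ij})$, which matches $F_\eta(A; Q)$ after renaming summation indices and using symmetry of $s$.

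The main obstacle is the $\sqrt{n}$-dominance step: the identification only works when the max in $f_\eta(A \cup Q)$ collapses in exactly the predicted pattern, which requires an assumption controlling how large a cross-similarity row-sum $\sum_{j \in A} s_{ij}$ (for $i \in V^{\prime}$) can get relative to $\sqrt{n}$. Stating this assumption cleanly --- and confirming it is consistent with the paper's normalization of $s$ --- is the subtle bookkeeping step, since without it the GSMI would carry extra $\max$ correction terms not present in $F_\eta$. A secondary care point is that $f_\eta$ is only restricted submodular, so one has to respect the structure of $\mathcal C(V, V^{\prime})$ at every invocation of a submodular-type inequality rather than freely applying full submodularity.
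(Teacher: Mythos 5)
Your proposal is correct and follows essentially the same route as the paper's own proof: compute $f_{\eta}(A)$, $f_{\eta}(Q)$, and $f_{\eta}(A\cup Q)$ under the block-identity kernel assumption so that the inner maxima collapse, subtract to obtain $F_{\eta}(A;Q)$, and establish restricted submodularity on $\mathcal C(V, V^{\prime})$ by a case split (concave-over-modular plus modular within $V$ or $V^{\prime}$, and non-negativity of the cross mutual information via $\max(x,y)\leq x+y$). You are in fact somewhat more careful than the paper in flagging the $\sqrt{n}$-dominance condition needed for the maxima in $f_{\eta}(A\cup Q)$ to collapse in the predicted pattern, a normalization the paper assumes implicitly.
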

\begin{proof}
Assume that the kernel matrix $s_{ij} \leq 1, \forall i, j \in \Omega$. Also, we are given that $s_{ij} = 1(i == j)$ for $i, j \in V$ or $i, j \in V^{\prime}$. Next, notice that:
\begin{align}
    f(A) = \eta \sum_{i \in V^{\prime}} \psi(\sum_{j \in A} s_{ij}) + \sum_{i \in A} \psi(\sqrt{n})
\end{align}
This holds because the $S$ kernel is an identity kernel within $V$ and $V^{\prime}$ and only has terms in the cross between the two sets. Similarly, \begin{align}
    f(Q) =  \sum_{i \in V} \psi(\sum_{j \in Q} s_{ij}) + \eta \sum_{i \in Q} \psi(\sqrt{n})
\end{align}
Finally, we obtain $f(A \cup Q)$:
\begin{align}
    f(A \cup Q) =  \eta \sum_{i \in V^{\prime} \setminus Q} \psi(\sum_{j \in A} s_{ij}) + \eta \sum_{i \in Q} \psi(\sqrt{n}) + \sum_{i \in V \setminus A} \psi(\sum_{j \in Q} s_{ij}) + \sum_{i \in A} \psi(\sqrt{n})
\end{align}
Combining all the three terms together, we obtain $f(A) + f(Q) - f(A \cup Q) = \eta \sum_{i \in A} \psi(\sum_{j \in Q}s_{ij}) + \sum_{j \in Q} \psi(\sum_{i \in A} s_{ij}) = F_{\eta}(A; Q)$.

Finally, to show that $f_{\eta}(S)$ is restricted submodular, notice that $f(A)$ is submodular if $A$ is restricted to either $V$ or $V^{\prime}$. Similarly, given sets $A \subseteq V, B \subseteq V^{\prime}$, it holds that $f(A) + f(B) - f(A \cup B) = I_f(A; B) \geq 0$ which implies the restricted submodularity of $f_{\eta}(S)$.
\end{proof}

\section{Sample computation of gradients}

\subsection{Generic Summarization}

For the purpose of learning the parameters $w_i$ and $\lambda_i$, we compute the gradients as, $$\frac{\partial L_n}{\partial w_i} = f_i(\hat{Y_n}, x^{(n)}, \lambda_i) - f_i(Y^{(n)}, x^{(n)}, \lambda_i)$$ and $$\frac{\partial L_n}{\partial \lambda_i} = w_i \frac{\partial f_i(\hat{Y_n}, x^{(n)}, \lambda_i)}{\partial \lambda_i}  - w_i \frac{\partial f_i(Y^{(n)}, x^{(n)}, \lambda_i)}{\partial \lambda_i} $$ where $$\hat{Y_n} = \argmax_{Y \subset V^{(n)}, |Y| \leq k} F(Y, x^{(n)}, w, \lambda) + l_n(Y) $$

For the gradients with respect to the respective internal parameters of individual function components $\frac{\partial f_i}{\partial \lambda_i}$, consider the generalized graphcut  $f(Y, x^{(n)}, \lambda) =   \sum_{i \in V, j \in Y} s_{ij}^{(n)} - \lambda \sum_{i, j \in Y} s_{ij}^{(n)}$ as an example. We compute its gradient as,\\
$$\frac{\partial f (Y, x^{(n)}, \lambda)}{\partial \lambda} = -\sum_{i, j \in Y} s_{ij}^{(n)} $$

\subsection{Query-Focused Summarization}
For the purpose of learning the parameters $w_i$, $\lambda_i$ and $\eta_i$ we compute the gradients as, $$\frac{\partial L_n}{\partial w_i} = I_{f_i}(\hat{Y_n}, Q^{(n)}, x^{(n)}, \lambda_i, \eta_i) - I_{f_i}(Y^{(n)}, Q^{(n)}, x^{(n)}, \lambda_i, \eta_i)$$ $$\frac{\partial L_n}{\partial \lambda_i} = w_i \frac{\partial I_{f_i}(\hat{Y_n}, Q^{(n)}, x^{(n)}, \lambda_i, \eta_i)}{\partial \lambda_i}  - w_i \frac{\partial I_{f_i}(Y^{(n)}, Q^{(n)}, x^{(n)}, \lambda_i, \eta_i)}{\partial \lambda_i} $$ and $$\frac{\partial L_n}{\partial \eta_i} = w_i \frac{\partial I_{f_i}(\hat{Y_n}, Q^{(n)}, x^{(n)}, \lambda_i, \eta_i)}{\partial \eta_i}  - w_i \frac{\partial I_{f_i}(Y^{(n)}, Q^{(n)}, x^{(n)}, \lambda_i, \eta_i)}{\partial \eta_i} $$ where $$\hat{Y_n} = \argmax_{Y \subset V^{(n)}, |Y| \leq k} F(Y, Q^{(n)}, x^{(n)}, w, \lambda, \eta) + l_n(Y) $$\\

For the gradients of individual function components $\frac{\partial I_{f_i}}{\partial \eta_i}$ with respect to the respective query-relevance-diversity trade-off parameters $\eta_i$, we show computation for some functions as follows:

\textbf{FL1MI:} $I_f(Y, Q^{(n)}, x^{(n)}, \eta)=\sum_{i \in V}\min(\max_{j \in Y}s_{ij}^{(n)}, \eta \max_{j \in Q^{(n)}}s_{ij}^{(n)})$\\
    $$ \frac{\partial I_f (Y, Q^{(n)}, x^{(n)}, \eta)}{\partial \eta} = \sum_{i \in V} (\max_{j \in Q^{(n)}}s_{ij}^{(n)}*1_{\max_{j \in Q^{(n)}}s_{ij}^{(n)} \leq \max_{j \in Y}s_{ij}^{(n)}})$$
    
    \textbf{FL2MI:} $I_f(Y, Q^{(n)}, x^{(n)}, \eta) = \sum_{i \in Q^{(n)}} \max_{j \in Y} s_{ij}^{(n)} + \eta \sum_{i \in Y} \max_{j \in Q^{(n)}} s_{ij}^{(n)}$\\
    $$ \frac{\partial I_f (Y, Q^{(n)}, x^{(n)}, \eta)}{\partial \eta} = \sum_{i \in Y} \max_{j \in Q^{(n)}} s_{ij}^{(n)}$$
    
    \textbf{LogDetMI:} $I_f(Y, Q^{(n)}, x^{(n)}, \eta) = -\log \det(I - \eta^2 S_{Y}^{-1}S_{YQ^{(n)}}S_{Q^{(n)}}^{-1}S_{YQ^{(n)}}^T)$
    
    We have, 
    $$\frac{-\partial\log\det(X)}{\partial \eta} = \frac{-1}{\det(X)} \frac{\det (X)}{\partial \eta} $$ and $\frac{\partial \det (X)}{\partial \eta} = \det(X) \mathrm{Tr}[ X^{-1} \frac{\partial X}{\partial \eta}]$. Hence, with $X = I - \eta^2 S_{Y}^{-1}S_{YQ^{(n)}}S_{Q^{(n)}}^{-1}S_{YQ^{(n)}}^T$ we have,
    $$ \frac{\partial I_f (Y, Q^{(n)}, x^{(n)}, \eta)}{\partial \eta} = \mathrm{Tr}[((I - \eta^2 S_{Y}^{-1}S_{YQ^{(n)}}S_{Q^{(n)}}^{-1}S_{YQ^{(n)}}^T))^{-1}*2\eta(S_{Y}^{-1}S_{YQ^{(n)}}S_{Q^{(n)}}^{-1}S_{YQ^{(n)}}^T)]$$

\subsection{Privacy Preserving Summarization}

For the purpose of learning the parameters $w_i$, $\lambda_i$ and $\nu_i$ we compute the gradients as, $$\frac{\partial L_n}{\partial w_i} = f_i(\hat{Y_n}, P^{(n)}, x^{(n)}, \lambda_i, \nu_i) - f_i(Y^{(n)}, P^{(n)}, x^{(n)}, \lambda_i, \nu_i)$$ $$\frac{\partial L_n}{\partial \lambda_i} = w_i \frac{\partial f_i(\hat{Y_n}, P^{(n)}, x^{(n)}, \lambda_i, \nu_i)}{\partial \lambda_i}  - w_i \frac{\partial f_i(Y^{(n)}, P^{(n)}, x^{(n)}, \lambda_i, \nu_i)}{\partial \lambda_i} $$ and $$\frac{\partial L_n}{\partial \nu_i} = w_i \frac{\partial f_i(\hat{Y_n}, P^{(n)}, x^{(n)}, \lambda_i, \nu_i)}{\partial \nu_i}  - w_i \frac{\partial f_i(Y^{(n)}, P^{(n)}, x^{(n)}, \lambda_i, \nu_i)}{\partial \nu_i} $$ where $$\hat{Y_n} = \argmax_{Y \subset V^{(n)}, |Y| \leq k} F(Y, P^{(n)}, x^{(n)}, w, \lambda, \nu) + l_n(Y) $$

For the gradients of individual function components $\frac{\partial f_i}{\partial \nu_i}$ with respect to the respective privacy sensitivity parameters $\nu_i$, we show computation for some functions as follows:

    \noindent \textbf{FLCondGain: } $f(Y, P^{(n)}, x^{(n)}, \nu)= \sum_{i \in V} \max(\max_{j \in Y} s_{ij}^{(n)} - \nu \max_{j \in P^{(n)}} s_{ij}^{(n)}, 0)$\\
    $$ \frac{\partial f(Y, P^{(n)}, x^{(n)}, \nu)}{\partial \nu}  = \sum_{i \in V} (-\max_{j \in P^{(n)}}s_{ij}^{(n)})*1_{(\max_{j \in Y} s_{ij}^{(n)} - \nu \max_{j \in P^{(n)}} s_{ij}^{(n)}) \geq 0}$$
    \textbf{LogDetCondGain: } $f(Y, P^{(n)}, x^{(n)}, \nu) = \log\det(S_Y - \nu^2 S_{YP^{(n)}}S_{P^{(n)}}^{-1}S_{YP^{(n)}}^T)$
    We have, 
    $$\frac{\partial\log\det(X)}{\partial \nu} = \frac{1}{\det(X)} \frac{\det (X)}{\partial \nu} $$ and $\frac{\partial \det (X)}{\partial \nu} = \det(X) \mathrm{Tr}[ X^{-1} \frac{\partial X}{\partial \nu}]$
    Hence, with $X = I - \nu^2 S_{Y}^{-1}S_{YP^{(n)}}S_{P^{(n)}}^{-1}S_{YP^{(n)}}^T$\\
    we have, 
    $$\frac{\partial f(Y, P^{(n)}, x^{(n)}, \nu)}{\partial \nu}  = -\mathrm{Tr}[(S_Y - \nu^2 S_{YP^{(n)}}S_{P^{(n)}}^{-1}S_{YP^{(n)}}^T)^{-1}*2\nu(S_{YP^{(n)}}S_{P^{(n)}}^{-1}S_{YP^{(n)}}^T))] $$
    
\subsection{Joint Summarization}

For the purpose of learning the parameters in $\Theta$ we compute the gradients as, $$\frac{\partial L_n}{\partial w_i} = f_i(\hat{Y_n}, Q^{(n)}, P^{(n)}, x^{(n)}, \lambda_i, \eta_i, \nu_i) - f_i(Y^{(n)}, Q^{(n)}, P^{(n)}, x^{(n)}, \lambda_i, \eta_i, \nu_i)$$ $$\frac{\partial L_n}{\partial \lambda_i} = w_i \frac{\partial f_i(\hat{Y_n}, Q^{(n)}, P^{(n)}, x^{(n)}, \lambda_i, \eta_i, \nu_i)}{\partial \lambda_i}  - w_i \frac{\partial f_i(Y^{(n)}, Q^{(n)}, P^{(n)}, x^{(n)}, \lambda_i, \eta_i, \nu_i)}{\partial \lambda_i} $$ $$\frac{\partial L_n}{\partial \eta_i} = w_i \frac{\partial f_i(\hat{Y_n}, Q^{(n)}, P^{(n)}, x^{(n)}, \lambda_i, \eta_i, \nu_i)}{\partial \eta_i}  - w_i \frac{\partial f_i(Y^{(n)}, Q^{(n)}, P^{(n)}, x^{(n)}, \lambda_i, \eta_i, \nu_i)}{\partial \eta_i} $$ $$\frac{\partial L_n}{\partial \nu_i} = w_i \frac{\partial f_i(\hat{Y_n}, Q^{(n)}, P^{(n)}, x^{(n)}, \lambda_i, \eta_i, \nu_i)}{\partial \nu_i}  - w_i \frac{\partial f_i(Y^{(n)}, Q^{(n)}, P^{(n)}, x^{(n)}, \lambda_i, \eta_i, \nu_i)}{\partial \nu_i} $$ where $$\hat{Y_n} = \argmax_{Y \subset V^{(n)}, |Y| \leq k} F(Y, Q^{(n)}, P^{(n)}, x^{(n)}, w, \lambda, \eta, \nu) + l_n(Y) $$

\section{Behavior of Different Functions in Case of Query-Focused Summarization on Synthetic Data (Single-Query Case)}

Since our framework is rich enough to support multiple queries and/or multiple elements in private set, we verify for the consistency in the behavior of functions when applied to a single query case and when applied to multiple query case. In the main paper we report the results of analysis on multiple query case. Here we report the results when single query point is used to produce query-focused summaries. As expected, we observe similar behavior of functions in single query case as well. For example, in Figure~\ref{fig:query-single-fl1}(Left), similar to the multiple query case, FL1MI reaches saturation after picking up a query matching point and then picks up points arbitrarily. In Figure~\ref{fig:query-single-fl1}(Middle) we see that by combining FL1MI with a diversity function, say DisparitySum, with even a very small weight, the elements picked up after saturation are diverse. We also observe similar effect of $\eta$ in governing the query-relevance and saturation trade-off here. As $\eta$ is increased summary produced by FL1MI increasingly becomes more query-relevant (Figure~\ref{fig:query-single-fl1}(Right)). This is in contrast with the behavior of the alternative formulation FL2MI which in the similar way starts off with saturation (first image in Figure~\ref{fig:query-single-others}) but even a slight positive non-zero $\eta$ immediately makes the summary highly query relevant (second image in Figure~\ref{fig:query-single-others}). The third and fourth images in Figure~\ref{fig:query-single-others} similarly show the behavior of COM and GCMI in selecting highly query-relevant summaries. The behavior of LogDetMI is also worth noting. As seen in Figure~\ref{fig:query-single-logdet}, with $\eta=1$ LogDetMI selects a highly query-relevant summary with no saturation. However as $\eta$ is increased, the summary starts becoming more diverse and less query-relevant. While there is indeed some effect of $\eta$, the trend in query-relevance is neither clearly increasing nor clearly decreasing.  

\begin{figure*}[h]
    \includegraphics[width=\textwidth]{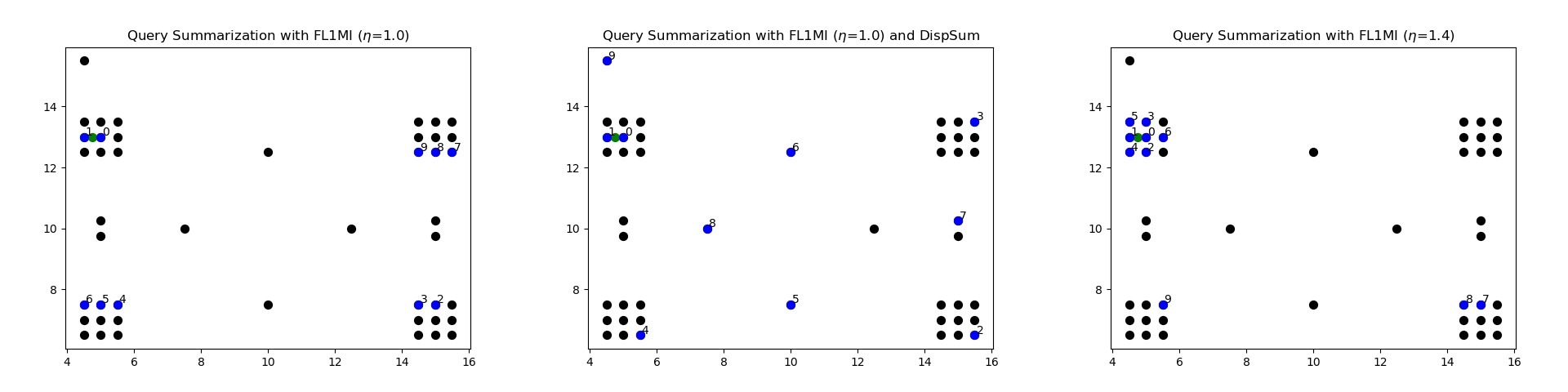}
    \caption{Left to Right: FL1MI exhibiting saturation; helped by slight additional diversity; increasing $\eta$ delays saturation }
    \label{fig:query-single-fl1}
\end{figure*}

\begin{figure*}[h]
    \includegraphics[width=\textwidth]{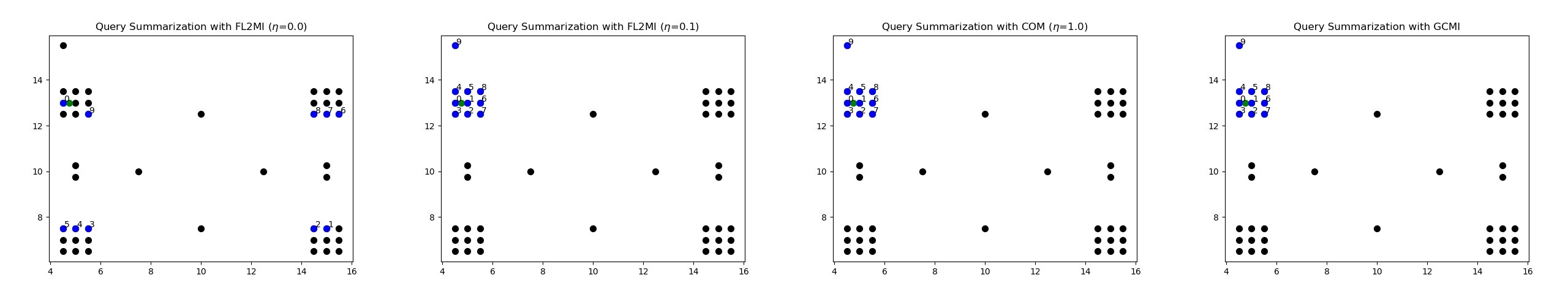}
    \caption{Behavior of FL2MI, COM and GCMI in single query case}
    \label{fig:query-single-others}
\end{figure*}

\begin{figure*}[h]
    \includegraphics[width=\textwidth]{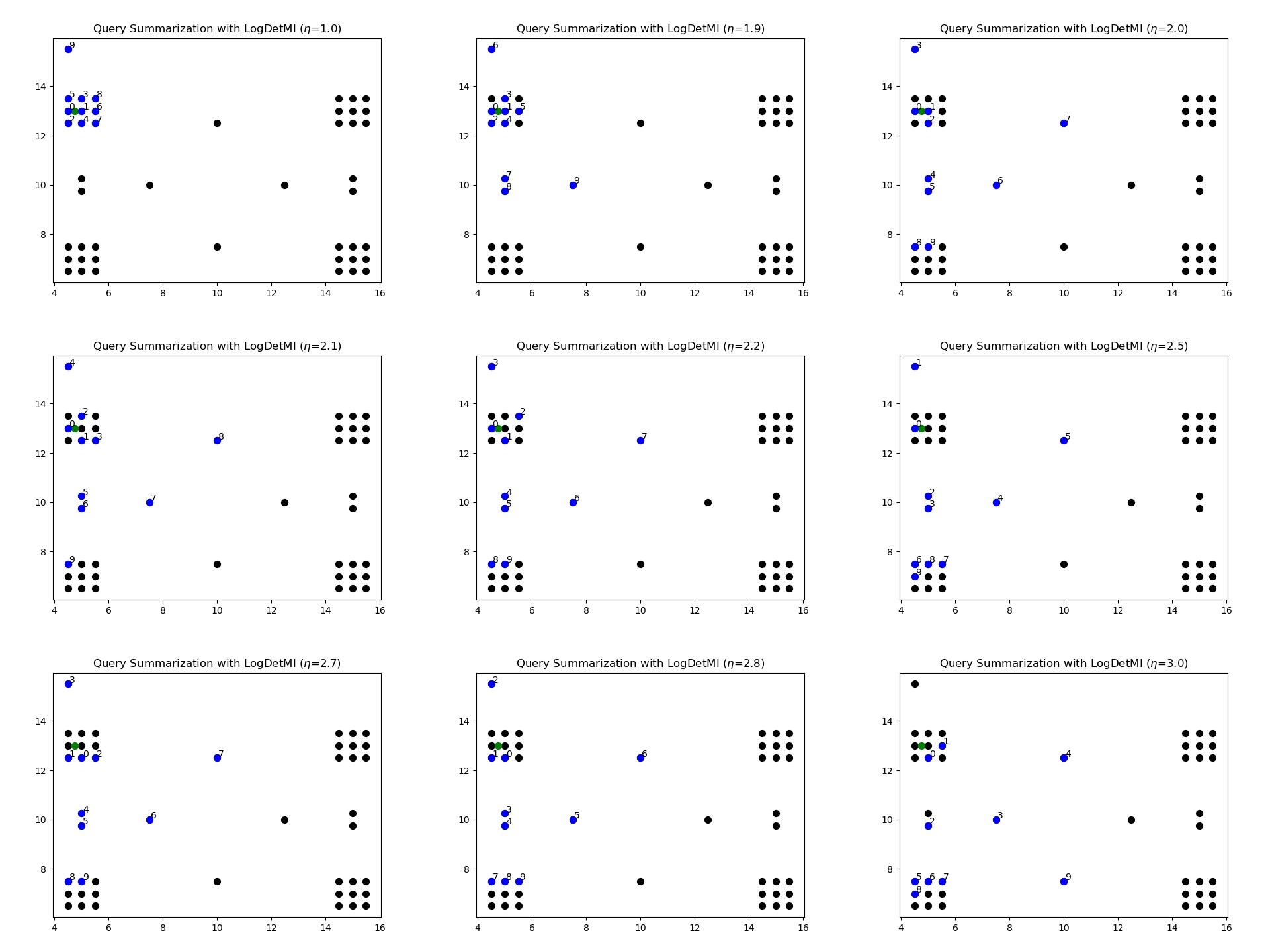}
    \caption{Role of $\eta$ in LogDetMI for single query case}
    \label{fig:query-single-logdet}
\end{figure*}

\end{document}